\newacronym{RL}{rl}{reinforcement learning}
\newacronym{SIL}{sil}{self-imitation learning}
\newacronym{PPO}{ppo}{proximal policy optimization}
\newacronym{TD3}{td3}{twin-delayed deep deterministic policy gradient}
\newacronym{DDPG}{ddpg}{deep deterministic policy gradient}
\newacronym{TD}{td}{temporal difference}
\newacronym{EM}{em}{expectation maximization}
\newacronym{IS}{is}{importance sampling}
\newacronym{HER}{her}{hindsight experience replay}
\newacronym{HPG}{hpg}{hindsight policy gradient}
\newacronym{hEM}{$\mathrm{h}$em}{hindsight expectation maximization}
\newacronym{MDP}{mdp}{markov decision process}
\newacronym{Variational RL}{variational rl}{variational reinforcement learning}
\newacronym{ELBO}{elbo}{evidence lower bound}
\newcolumntype{C}[1]{>{\Centering}m{#1}}
\newtheoremstyle{definition}
{3pt} 
{3pt} 
{} 
{} 
{\bfseries} 
{.} 
{.5em} 
{} 
\theoremstyle{definition}
\renewcommand{\epsilon}{\varepsilon}
\renewcommand{\hat}{\widehat}
\renewcommand{\tilde}{\widetilde}
\renewcommand{\bar}{\overline}
\title{Unifying Gradient Estimators for Meta-Reinforcement Learning via Off-Policy Evaluation}
\author{%
   Yunhao Tang\** \\
  Columbia University\\
  \texttt{yt2541@columbia.edu}
  \And
  Tadashi Kozuno\** \\
  University of Alberta\\
  \texttt{tadashi.kozuno@gmail.com }
  \And
  Mark Rowland\\
  DeepMind London\\
  \texttt{markrowland@deepmind.com }
  \And
  R\'emi Munos\\
  DeepMind Paris\\
  \texttt{munos@deepmind.com }
  \And
  Michal Valko\\
  DeepMind Paris\\
  \texttt{valkom@deepmind.com }
  }
\begin{document}

\maketitle

\begin{abstract}
Model-agnostic meta-reinforcement learning requires estimating the Hessian matrix of value functions. This is challenging from an implementation perspective, as repeatedly differentiating policy gradient estimates may lead to biased Hessian estimates. In this work, we provide a unifying framework for estimating higher-order derivatives of value functions, based on off-policy evaluation. Our framework interprets a number of prior approaches as special cases and elucidates the bias and variance trade-off of Hessian estimates. This framework also opens the door to a new family of estimates, which can be easily implemented with auto-differentiation libraries, and lead to performance gains in practice.
We open source the code to reproduce our results\footnote{\url{https://github.com/robintyh1/neurips2021-meta-gradient-offpolicy-evaluation}}. \end{abstract}

\section{Introduction}

Recent years have witnessed the success of reinforcement learning (RL) in challenging domains such as game playing \citep{mnih2013}, board games \citep{silver2016}, and robotics control \citep{levine2016}. However, despite such breakthroughs, state-of-the-art RL algorithms are still plagued by sample inefficiency, as training agents requires orders of magnitude more samples than humans would experience to reach similar levels of performance \citep{mnih2013,silver2016}. One hypothesis on the source of such inefficiencies is that standard RL algorithms are not good at leveraging prior knowledge, which implies that whenever presented with a new task, the algorithms must learn from scratch. On the contrary, humans are much better at transferring prior skills to new scenarios, an innate ability arguably obtained through evolution over thousand of years.

Meta-reinforcement learning (meta-RL) formalizes the learning and transfer of prior knowledge in RL~\citep{finn2017model}. The high-level idea is to have an RL agent that interacts with a distribution of environments at meta-training time. The objective is that at meta-testing time, when the agent interacts with previously unseen environments, it can learn much faster than meta-training time.
Here, faster learning is measured by the number of new samples needed to achieve a good level of performance.
If an agent can achieve good performance at meta-testing time, it  embodies the ability to transfer knowledge from prior experiences at meta-training time. meta-RL algorithms can be constructed in many ways, such as those based on recurrent memory \citep{wang2016,duan2016rl}, gradient-based adaptations \citep{finn2017model}, learning loss functions \citep{houthooft2018evolved,oh2020discovering,xu2020meta}, probabilistic inference of context variables \citep{rakelly2019efficient,zintgraf2019varibad,fakoor2019meta}, online adaptation of hyper-parameters within a single lifetime \citep{xu2018meta,zahavy2020self} and so on. Some of these formulations have mathematical connections; see e.g. \citep{ortega2019meta} for an in-depth discussion.

We focus on gradient-based adaptations \citep{finn2017model}, where the agent carries out policy gradient updates \citep{sutton1999} at both meta-training and meta-testing time. Conceptually, since meta-RL seeks to optimize the way in which the agent adapts itself in face of new environments, it needs to \emph{differentiate} through the policy gradient update itself. This effectively reduces the meta-RL problem into estimations of Hessian matrices of value functions. 

\paragraph{Challenges for computing  Hessian matrix of value functions.} To calculate Hessian matrices (or unbiased estimates thereof) for supervised learning objectives, it suffices to differentiate through gradient estimates, which can be easily implemented with auto-differentiation packages \citep{asadi2017,jax2018github,paszke2017automatic}. However, it is not the case for value functions in RL. Intuitively, this is because value functions are defined via expectations with respect to distributions that themselves depend on the policy parameters of interest, whereas in supervised learning the expectations are defined with respect to a fixed data distribution. As a result, implementations that do not take this into account may lead to  estimates \citep{finn2017model} whose bias is not properly characterized and might have a negative impact on downstream applications~\citep{liu2019taming}. 

Motivated by this observation, a number of prior works suggest implementation alternatives that lead to unbiased Hessian estimates \citep{foerster2018dice}, with potential variance reduction \citep{farquhar2019loaded,mao2019baseline,liu2019taming}; or biased estimates with small variance \citep{rothfuss2018promp}. However, different algorithms in this space are motivated and derived in seemingly unrelated ways: for example,  \citep{foerster2018dice,farquhar2019loaded,mao2019baseline} derive code-level implementations within the general context of stochastic computation graphs \citep{schulman2015gradient}. On the other hand, \citep{rothfuss2018promp} derives the estimates by explicitly analyzing certain terms with potentially high variance, which naturally produces bias in the final estimate. Due to apparently distinct ways of deriving estimates, it is not immediately clear how  all such methods are related, and whether there could be other alternatives.

\begin{table*}[t]
\caption{Interpretation of prior work on higher-order derivative estimations as special instances of differentiating off-policy evaluation estimates. Given any off-policy evaluation estimate $\hat{V}^{\pi_\theta}$ from the second row, we recover higher-order derivative estimates in prior work in the first row, by differentiating through the estimate $\nabla_\theta^K \hat{V}^{\pi_\theta}$. 
}
\centering
\begin{small}
\begin{sc}
\begin{tabular}{C{0.6in} *4{C{1.in}}}\toprule[1.5pt]
\bf Off-policy evaluation estimates & Step-wise IS \citep{precup2001off} & Doubly-robust \citep{jiang2016doubly} & TayPO-$1$ \citep{kakade2002approximately,tang2020taylor} &  TayPO-$2$ \citep{tang2020taylor} \\\midrule
\bf Prior work & \bf DiCE \citep{foerster2018dice}  & \bf Loaded DiCE \citep{farquhar2019loaded,mao2019baseline} & \bf LVC \citep{rothfuss2018promp} & \bf Second-order (this work) \\
\bottomrule
\end{tabular}
\end{sc}
\end{small}
\vskip -0.1in
\label{table:smallsize}
\end{table*}

\paragraph{Central contribution.} 
We present a unified framework for estimating higher-order derivatives of value functions, based on the concept of off-policy evaluation. The main insights are summarized in Table~\ref{table:smallsize}, where most aforementioned prior work can be interpreted as special cases of our framework. Our framework has a few advantages: (1) it conceptually unifies a few seemingly unrelated prior methods; (2) it elucidates the bias and variance trade-off of the estimates; (3) it naturally produces new methods based on Taylor expansions of value functions \citep{tang2020taylor}. 

After a brief background introduction on meta-RL in Section~\ref{sec:background}, we will discuss the above aspects in detail in Section~\ref{s:hess}. From an implementation perspective, we will show in Section~\ref{sec:code} that both the general framework and the new method can be conveniently implemented in auto-differentiation libraries \citep{abadi2016tensorflow,paszke2017automatic}, making it amenable in a practical setup. Finally in Section~\ref{sec:exp}, we validate important claims based on this framework with experimental insights.

\section{Background}
\label{sec:background}
We begin with the notation and background on RL and meta-RL.

\subsection{Task-based reinforcement learning}

Consider a Markov decision process (MDP) with state space $\mathcal{X}$ and action space $\mathcal{A}$. At time $t \geq 0$, the agent takes action $a_t\in\mathcal{A}$ in state $x_t\in\mathcal{X}$, receives a reward $r_t$ and transitions to a next state $x_{t+1}\sim p(\cdot|x_t,a_t)$. Without loss of generality, we assume a single starting state $x_0$. Here, we assume the reward $r_t=r(x_t,a_t,g)$ to be a deterministic function of state-action pair $(x_t,a_t)$ and the task variable $g \in \mathcal{G}$. The task variable $g\sim p_\mathcal{G}$ is resampled for every episode. For example, $x_t\in\mathcal{X}$ is the sensory space of a running robot, $a_t\in\mathcal{A}$ 
is the control, $g$ is the episodic target direction in which to run and $r(x_t,a_t,g)$ is the speed in direction $g$. A policy $\pi:\mathcal{X}\rightarrow \mathcal{P}(\mathcal{A})$ specifies a distribution over actions at each state. For convenience, we define the value function $V^\pi(x,g)\coloneqq\mathbb{E}_\pi\left[\sum_{t=0}^\infty \gamma^t r(x_t,a_t,g) \; \middle| \;x_0=x \right]$ and Q-function $Q^\pi(x,a,g)\coloneqq\mathbb{E}_\pi\left[\sum_{t=0}^\infty \gamma^t r(x_t,a_t,g)\; \middle| \;x_0=x ,a_0=a\right]$. Without loss of generality, we assume the  policy to be smoothly parameterized as $\pi_\theta$ with parameter $\theta\in\mathbb{R}^{D}$. We further assume that the MDPs terminate within a finite horizon of $H<\infty$ under any policy. 

\subsection{Gradient-based meta-reinforcement learning}

The motivation of meta-RL is to identify a policy $\pi_\theta$ such that given a task $g$, after updating the parameter from $\theta$ to $\theta'$ with a parameter update computed under rewards $r(x,a,g)$, the resulting policy $\pi_{\theta'}$ performs well. Formally, define $U(\theta,g) \in \mathbb{R}^{D}$ as a parameter update to $\theta$, for example, one policy gradient ascent step. Model-agnostic meta-learning (MAML) \citep{finn2017model} formulates meta-RL as optimizing the value function at the updated policy $\mathbb{E}_{g\sim p_\mathcal{G}}\left[V^{\pi_{\theta'}}(x,g)\right]$ where $\theta'=\theta+U(\theta,g)$ is the updated policy. The optimization is with respect to the initial policy parameter $\theta$. The aim is to find $\theta$ such that it entails fast learning (or adaptation) to the environment, through the \emph{inner loop} update operator $U(\theta,g)$, that leads to high-performing updated policy $\theta'$. Consider the following problem,
\begin{align}
    \max_\theta\  F(\theta) \coloneqq   \mathbb{E}_{g\sim p_\mathcal{G}} \left[V^{\pi_{\theta'}}(x_0,g)\right], \ \theta'=\theta+U(\theta,g).\label{eq:maml}
\end{align}
We can decompose the meta-gradient into two terms based on the chain rule, 
\begin{align}
    \nabla_\theta F(\theta) = \mathbb{E}_{g\sim p_\mathcal{G}}\left[ \nabla_\theta F(\theta,g)\right] \coloneqq \mathbb{E}_{g\sim p_\mathcal{G}}\left[ \nabla_\theta \theta'  \nabla_{\theta'}   V^{\pi_{\theta'}}(x_0,g)\right] \label{eq:maml-hvp}
\end{align}
where $\nabla_\theta \theta'=I+\nabla_\theta U(\theta,g) \in \mathbb{R}^{D\times D}$ is a matrix, and $ \nabla_{\theta'}V^{\pi_{\theta'}}(x_0,g)$ is the vanilla policy gradient evaluated at the updated parameter $\theta'$ \citep{sutton1998}. A straightforward way to optimizing Eqn~\eqref{eq:maml} is to carry out the \emph{outer loop} update $\theta\leftarrow\theta +\alpha \nabla_\theta F(\theta)$ with learning rate $\alpha>0$.

\paragraph{Policy gradient update.} Following MAML \citep{finn2017model},
we focus on policy gradient update where $U(\theta,g)=\eta \nabla_\theta  V^{\pi_\theta}(x_0,g)$ for a fixed step size $\eta>0$. The matrix $
    \nabla_{\theta} U(\theta,g) = \eta \nabla_\theta^2  V^{\pi_\theta}(x_0,g)   
$ is the Hessian of the value function with respect to policy parameters; henceforth, we define $H_\theta(x_0,g)\coloneqq \nabla_\theta^2 V^{\pi_\theta}(x_0,g)\in\mathbb{R}^{D\times D}$.

\paragraph{Estimating meta-gradients.} Practical algorithms construct stochastic estimates of the meta-gradients using a finite number of samples. In Eqn~\eqref{eq:maml-hvp}, we decompose the meta-gradients into the multiplication of a Hessian matrix with a policy gradient vector.
Given a fixed task variable $g$, a common practice of prior work is to construct the gradient estimate as the product of the Hessian estimate and the policy gradient estimate $\hat{\nabla}F(\theta,g)=(I+\eta\hat{H}_\theta(x_0,g))\hat{\nabla}_{\theta'} V^{\pi_{\theta'
}}(x_0,g)$;
see, e.g., \citep{finn2017model,al2017continuous,stadie2018some,rothfuss2018promp,foerster2018dice,farquhar2019loaded,mao2019baseline,liu2019taming}. See Algorithm 1 for the full pseudocode for estimating meta-gradients by sampling multiple tasks. Since there is a large literature on constructing accurate estimates to policy gradient (e.g., actor-critic algorithms use baselines for variance reduction \citep{konda2000actor}), the main challenge consists in estimating the Hessian matrix accurately. 

\paragraph{Bias of common plug-in estimators.} 
Common practices in meta-RL algorithms rely on the premise that if both Hessian and policy gradient estimates are  unbiased, then the meta-gradient estimate is unbiased too.
\begin{align*}
    \mathbb{E}\left[\hat{H}_\theta(x_0,g)\right]=H_\theta(x_0,g), \mathbb{E}\left[\hat{\nabla}_{\theta'}V^{\pi_{\theta'}}(x_0,g)\right]=\nabla_{\theta'}V^{\pi_{\theta'}}(x_0,g) \Rightarrow \mathbb{E}\left[\hat{\nabla}_\theta F(\theta,g)\right] = \nabla_\theta F(\theta,g).
\end{align*} 
 Unfortunately, this is not true. This is because the two estimates are in general correlated when the sample size is finite, leading to the bias of the overall estimate. We provide further discussions in Appendix~\ref{appendix:mamlbias}. For the rest of the paper, we follow practices of prior work and focus on the properties of Hessian estimates, leaving a more proper treatment of this bias to future work. 

\begin{algorithm}[h]
\label{algo:meta-rl-estimate}
\begin{algorithmic}
\FOR{i=1,2...n}
\STATE Sample task variable $g_i\sim p_\mathcal{G}$.
\STATE Sample $B$ trajectories under policy $\pi_\theta$; Compute $B$-trajectory policy gradient estimate $\hat{\nabla}_\theta V^{\pi_\theta}(x_0,g_i)$, update parameter $\theta'=\theta+\hat{\nabla}_\theta V^{\pi_\theta}(x_0,g_i)$.
\STATE Compute $B$-trajectory Hessian estimate $\hat{H}_\theta(x_0,g_i)$ and an unbiased policy gradient estimate at $\theta'$, i.e.,  $\hat{\nabla}_{\theta'}V^{\pi_\theta}(x_0,g)$.
\STATE Compute the $i$-th meta-gradient estimate $\hat{\nabla}F(\theta,g_i) = \left(I+\eta\hat{H}_\theta(x_0,g_i)\right) \hat{\nabla}_{\theta'}V^{\pi_{\theta'}}(x_0,g_i)$.
\ENDFOR
\STATE Output averaged meta-gradient estimate $\frac{1}{n}\sum_{i=1}^n \hat{\nabla}_\theta F(\theta,g_i)$.
\caption{Pseudocode for computing meta-gradients for the MAML objective}
\end{algorithmic}
\end{algorithm}

\section{Deriving Hessian estimates with off-policy evaluation}
\label{s:hess}

Since the meta-gradient estimates are computed by averaging over task variables $g\sim p_\mathcal{G}$, 
 in the following, we focus on Hessian estimates at a single state and task variable $H_\theta(x,g)$ with a fixed $g$. In this section, we also drop the dependency of the value function on $g$, such that, e.g., $V^{\pi_\theta}(x_0)\equiv V^{\pi_\theta}(x_0,g)$ and $Q(x,a,g)\equiv Q(x,a)$.

\subsection{Off-policy evaluation: maintaining higher-order dependencies on parameters}
We assume access to data $(x_t,a_t,r_t)_{t=0}^\infty$ generated under a behavior policy $\mu$. Off-policy evaluation \citep{precup2001off} consists in building estimators $\hat{V}^{\pi_\theta}(x,g)$ using the behavior data such that
$
    \hat{V}^{\pi_\theta}(x)  \approx V^{\pi_\theta}(x)
$ for a range of target policies $\pi_\theta$. Note that the estimate  $\hat{V}^{\pi_\theta}(x)$ is a random variable depending on $(x_t,a_t,r_t)_{t=0}^\infty$, it is also a function of $\theta$. The approximation $\hat{V}^{\pi_\theta}(x)  \approx V^{\pi_\theta}(x)$ implies that $\hat{V}^{\pi_\theta}(x)$ is indicative of how the value function $V^{\pi_\theta}(x)$ depends on $\theta$, and hence maintains the higher-order dependencies on $\theta$. Throughout, we assume  $\pi_\theta(a|x)>0,\mu(a|x)>0$ for all $(x,a) \in \mathcal{X} \times \mathcal{A}$.

\paragraph{Example: step-wise importance sampling (IS) estimate.} As a concrete example, consider the unbiased step-wise IS estimate $\hat{V}_\text{IS}^{\pi_\theta}(x_0)=\sum_{t=0}^\infty \gamma^t \left(\Pi_{s\leq t}\rho_s^\theta\right) r_t$ where $\rho_s^\theta\coloneqq \pi_\theta(a_s|x_s)/\mu(a_s|x_s)$. Since the value function $V^{\pi_\theta}(x)$ is in general a highly non-linear function of $\theta$ (see discussions in \citep{agarwal2020optimality}) we see that $\hat{V}_\text{IS}^{\pi_\theta}(x_0)$ retains such dependencies via the sum of product of IS ratios.

\subsection{Warming up: deriving unbiased estimates with variance reduction}

We start with a general result based on the intuition above: given an estimate $\hat{V}^{\pi_\theta}(x)$ to
$V^{\pi_\theta}(x)$, we can directly use the $m$\textsuperscript{th}-order derivative $\nabla_\theta^m \hat{V}^{\pi_\theta}(x)\in \mathbb{R}^{D^K}$ as an estimate to $\nabla_\theta^K V^{\pi_\theta}(x)$. We introduce two assumptions: \textbf{(A.1)} $\hat{V}^{\pi_\theta}(x)$ is $m$\textsuperscript{th}-order differentiable w.r.t.\,$\theta$ almost surely.
    \textbf{(A.2)}  $\left\lVert \nabla_\theta^m \hat{V}^{\pi_\theta}(x)\right\rVert_\infty<M$ for some constant $M$ for the order $m$ of interest.
These assumptions are fairly mild; see further details in Appendix~\ref{appendix:assumption}. 
The following result applies to general unbiased off-policy evaluation estimates.

\begin{restatable}{proposition}{offpolicyestimate}\label{prop:offpolicyestimate} Assume \textbf{(A.1)} and \textbf{(A.2)} are satisfied. Further assume we have an estimator $\hat{V}^{\pi_\theta}(x)$ which is unbiased ($\mathbb{E}_\mu\left[\hat{V}^{\pi_{\theta'}}(x)\right]=V^{\pi_{\theta'}}(x)$) for all $\theta' \in N(\theta)$ where $N(\theta)$ is some open set that contains $\theta$. Under some additional mild conditions, the $m$\textsuperscript{th}-order derivative of the estimate $\nabla_\theta^m \hat{V}^{\pi_\theta}(x_0)$ are unbiased estimates to the $m$\textsuperscript{th}-order derivative of the value function $\mathbb{E}_\mu\left[\nabla_\theta^m \hat{V}^{\pi_\theta}(x)\right]=\nabla_\theta^m V^{\pi_\theta}(x)$ for $m\geq 1$. 
\end{restatable}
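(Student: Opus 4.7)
The plan is to prove the result by interchanging differentiation and expectation, then invoking the pointwise unbiasedness hypothesis. Since $\mathbb{E}_\mu[\hat{V}^{\pi_{\theta'}}(x)] = V^{\pi_{\theta'}}(x)$ for every $\theta' \in N(\theta)$, both sides are $m$-times differentiable functions of $\theta'$ on $N(\theta)$, and differentiating $m$ times at the point of interest yields
\[
\nabla_\theta^m V^{\pi_\theta}(x) \;=\; \nabla_\theta^m \mathbb{E}_\mu\!\left[\hat{V}^{\pi_\theta}(x)\right].
\]
The content of the proposition is therefore the identity $\nabla_\theta^m \mathbb{E}_\mu[\hat{V}^{\pi_\theta}(x)] = \mathbb{E}_\mu[\nabla_\theta^m \hat{V}^{\pi_\theta}(x)]$, i.e.\ the legitimacy of moving $\nabla_\theta^m$ inside the expectation.

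To justify this swap, I would invoke the standard Leibniz rule for differentiation under the integral sign (equivalently, the dominated convergence theorem applied to successive difference quotients). For $m = 1$, by \textbf{(A.1)} the map $\theta \mapsto \hat{V}^{\pi_\theta}(x)$ is almost surely differentiable on $N(\theta)$, and by \textbf{(A.2)} the gradient is uniformly bounded by a constant. The mean value theorem then dominates each coordinatewise difference quotient $h^{-1}[\hat{V}^{\pi_{\theta+h e_i}}(x) - \hat{V}^{\pi_\theta}(x)]$ by the same constant $M$, which is trivially $\mu$-integrable. Dominated convergence then allows the limit $h \to 0$ to be passed inside $\mathbb{E}_\mu[\cdot]$, yielding the interchange at first order.

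For general $m$, I would proceed by induction on the order of differentiation: once the interchange has been established at order $m-1$, the same DCT argument applied to $\nabla_\theta^{m-1} \hat{V}^{\pi_\theta}(x)$ yields the interchange at order $m$, using the uniform bound on $\nabla_\theta^m \hat{V}^{\pi_\theta}(x)$ from \textbf{(A.2)} to dominate the relevant difference quotients. The "additional mild conditions" referred to in the statement are most naturally read as the hypothesis that \textbf{(A.2)} holds at every intermediate order $1,\dots,m$ (so that the induction can be carried through), together with measurability of $\theta \mapsto \hat{V}^{\pi_\theta}(x)$ on $N(\theta)$ so that Fubini/DCT apply.

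The main obstacle is bookkeeping this uniform-domination requirement at every intermediate order rather than just the top order, since a bound on $\nabla^m_\theta$ alone does not automatically propagate downward on an open neighborhood. In the concrete examples of interest—step-wise IS, doubly-robust, TayPO-$K$—this is not delicate: each $\hat{V}^{\pi_\theta}(x)$ is a finite polynomial in bounded rewards and in IS ratios / log-policy derivatives of $\pi_\theta$, so mild smoothness of $\theta \mapsto \pi_\theta(a|x)$ together with the finite horizon $H < \infty$ yield uniform bounds on derivatives of all orders in a neighborhood of $\theta$, and the induction closes.
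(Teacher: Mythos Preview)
Your proposal is correct and matches the paper's proof essentially step for step: both argue by induction on the order of differentiation, use the mean value theorem (via \textbf{(A.1)}) to dominate coordinatewise difference quotients, and then invoke dominated convergence (via the bound in \textbf{(A.2)}) to pass the limit $h\to 0$ inside $\mathbb{E}_\mu[\cdot]$. Your explicit remark that the domination must hold at every intermediate order $1,\dots,m$ is, if anything, slightly more careful than the paper's own presentation.
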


\paragraph{Doubly-robust estimates.}
As a special case, we describe the doubly-robust (DR) off-policy evaluation estimator \citep{dudik2014doubly,jiang2016doubly,thomas2016data}. Assume we have access to a state-action dependent critic $Q(x,a,g)$, and we use the notation $Q(x,\pi(x))\coloneqq \sum_{a} Q(x,a)\pi(a|x)$. The DR estimate is defined recursively as follows,
\begin{align}
    \hat{V}^{\pi_\theta}_\text{DR}(x_t) = Q(x_t,\pi_\theta(x_t)) +   \rho_t^\theta \left(r_t + \gamma  \hat{V}^{\pi_\theta}_\text{DR}(x_{t+1})  -Q(x_t,a_t)\right)
    \label{eq:db-estimate}.
\end{align}

The DR estimate is unbiased for all $\pi_\theta$ and subsumes the step-wise IS estimate as a special case when $Q(x,a)\equiv 0$. If the critic is properly chosen, e.g., $Q(x,a)\approx Q^{\pi_\theta}(x,a)$, it can lead to significant variance reduction compared to $\hat{V}_\text{IS}^{\pi_\theta}(x_0)$. By directly differentiating the estimate $\nabla_\theta^m \hat{V}_\text{DR}^{\pi_\theta}(x)$, we derive estimators for higher-order derivatives of the value function; the result for the gradient in Proposition~\ref{prop:offpolicyestimategrad} was shown in \citep{huang2020importance}.

\begin{restatable}{proposition}{offpolicyestimategrad}\label{prop:offpolicyestimategrad} Define $\pi_t\coloneqq \pi_\theta(a_t|x_t)$ and let $\delta_t\coloneqq r_t + \gamma \hat{V}_\text{DR}^{\pi_\theta}(x_{t+1})-Q(x_t,a_t)$ be the sampled temporal difference error at time $t$. Note that $\nabla_\theta \log \pi_t \in\mathbb{R}^{D}$ and $\nabla_\theta^2 \log \pi_t \in\mathbb{R}^{D\times D}$. The estimates of higher-order derivatives can be deduced recursively, and in particular for $m=1,2$,
\begin{align}
    \nabla_\theta \hat{V}_\text{DR}^{\pi_\theta}(x_t) &= \nabla_\theta Q(x_t,\pi_\theta(x_t)) + \rho_t^\theta \delta_t \nabla_\theta \log \pi_t   + \gamma \rho_t^\theta \nabla_\theta \hat{V}_\text{DR}^{\pi_\theta}(x_{t+1}),  \label{eq:recursive-pg}\\
    \nabla_\theta^2 \hat{V}_\text{DR}^{\pi_\theta}(x_t) &=  \rho_t^\theta \delta_t \left(\nabla_\theta^2 \log \pi_t + \nabla_\theta \log \pi_t \nabla_\theta \log \pi_t^T \right) + \gamma \rho_t^\theta \nabla_\theta \hat{V}_\text{DR}^{\pi_\theta}(x_t) \nabla_\theta \log \pi_t^T \nonumber \\
    &\ \ + \gamma \rho_t^\theta \nabla_\theta \log  \pi_t \nabla_\theta \hat{V}_\text{DR}^{\pi_\theta}(x_t)^T + \nabla_\theta^2 Q(x_t,\pi_\theta(x_t)) + \gamma \rho_t^\theta \nabla_\theta^2 \hat{V}_\text{DR}^{\pi_\theta}(x_{t+1}). \label{eq:recursive-hessian}
\end{align}
\end{restatable}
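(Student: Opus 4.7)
The proof is a direct term-by-term differentiation of the DR recursion in Eq.~\eqref{eq:db-estimate}. The only non-trivial ingredients are the log-derivative identity $\nabla_\theta \rho_t^\theta = \rho_t^\theta \nabla_\theta \log \pi_t$ (since $\mu$ does not depend on $\theta$) and the observation that $\delta_t = r_t + \gamma \hat{V}_\text{DR}^{\pi_\theta}(x_{t+1}) - Q(x_t,a_t)$ has $\nabla_\theta \delta_t = \gamma \nabla_\theta \hat{V}_\text{DR}^{\pi_\theta}(x_{t+1})$, because neither $r_t$ nor the critic evaluated at the sampled $a_t$ depends on $\theta$.

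For the first-order formula Eq.~\eqref{eq:recursive-pg}, I would rewrite the recursion as $\hat{V}_\text{DR}^{\pi_\theta}(x_t) = Q(x_t,\pi_\theta(x_t)) + \rho_t^\theta \delta_t$ and apply $\nabla_\theta$ directly. The first term contributes $\nabla_\theta Q(x_t,\pi_\theta(x_t))$, and the product rule on $\rho_t^\theta \delta_t$ produces exactly $\rho_t^\theta \delta_t \nabla_\theta \log \pi_t + \gamma \rho_t^\theta \nabla_\theta \hat{V}_\text{DR}^{\pi_\theta}(x_{t+1})$, matching the claim.

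For the second-order formula Eq.~\eqref{eq:recursive-hessian}, I would differentiate the gradient expression term-by-term. (i) Differentiating $\nabla_\theta Q(x_t,\pi_\theta(x_t))$ yields $\nabla_\theta^2 Q(x_t,\pi_\theta(x_t))$. (ii) Differentiating $\rho_t^\theta \delta_t \nabla_\theta \log \pi_t$ via the product rule produces three contributions: the $\rho_t^\theta$-factor gives the outer product $\rho_t^\theta \delta_t \nabla_\theta \log \pi_t \nabla_\theta \log \pi_t^T$; the $\delta_t$-factor gives $\gamma \rho_t^\theta \nabla_\theta \log \pi_t \nabla_\theta \hat{V}_\text{DR}^{\pi_\theta}(x_{t+1})^T$; and the score-vector factor gives $\rho_t^\theta \delta_t \nabla_\theta^2 \log \pi_t$. (iii) Differentiating $\gamma \rho_t^\theta \nabla_\theta \hat{V}_\text{DR}^{\pi_\theta}(x_{t+1})$ gives $\gamma \rho_t^\theta \nabla_\theta \hat{V}_\text{DR}^{\pi_\theta}(x_{t+1}) \nabla_\theta \log \pi_t^T$ and the recursive Hessian $\gamma \rho_t^\theta \nabla_\theta^2 \hat{V}_\text{DR}^{\pi_\theta}(x_{t+1})$. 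Summing these five pieces yields Eq.~\eqref{eq:recursive-hessian}.

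The computation is entirely routine calculus; the only pitfall is bookkeeping the orientation of outer products, since the derivative of a column vector with respect to $\theta$ produces a matrix whose new $\theta$-index must consistently be appended on the same side (say, as rows) throughout. This convention determines the transposition pattern in the final expression, e.g.\ whether one obtains $\nabla_\theta \log \pi_t \nabla_\theta \hat{V}_\text{DR}^{\pi_\theta}(x_{t+1})^T$ or its transpose, and mixing conventions would silently break the symmetry of the Hessian. Extensions to $m\geq 3$ follow by iterating the same differentiation on Eq.~\eqref{eq:recursive-hessian}; no new conceptual ingredient enters, but the notation for higher-order tensor outer products becomes heavier.
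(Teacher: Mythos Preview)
Your proposal is correct and follows essentially the same approach as the paper's proof: both identify the two key identities $\nabla_\theta \rho_t^\theta = \rho_t^\theta \nabla_\theta \log \pi_t$ and $\nabla_\theta \delta_t = \gamma \nabla_\theta \hat{V}_\text{DR}^{\pi_\theta}(x_{t+1})$, then differentiate the three terms of Eq.~\eqref{eq:recursive-pg} via the product rule to obtain exactly the same five contributions you list. Your additional remarks on outer-product orientation and the extension to $m\geq 3$ are not in the paper's proof but are sound; note also that your derivation (like the paper's) correctly produces $\nabla_\theta \hat{V}_\text{DR}^{\pi_\theta}(x_{t+1})$ in the cross terms, whereas the displayed Eq.~\eqref{eq:recursive-hessian} writes $x_t$---this appears to be a typo in the statement rather than an error in your argument.
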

\paragraph{Bias and variance of Hessian estimates.}
Proposition~\ref{prop:offpolicyestimate} implies that $\nabla_\theta \hat{V}_{\pi_\theta}(x_0)$ and
$\nabla_\theta^2 \hat{V}_{\pi_\theta}(x_0)$ are both unbiased. To analyze the variance, we start with $m=1$: note that when on-policy $\mu=\pi_t$ \citep{sutton1999}, $\nabla_\theta V_\text{DR}^{\pi_\theta}(x_0)$ recovers a form of gradient estimates similar to actor-critic policy gradient with action-dependent baselines \citep{liu2017action,wu2018variance,tucker2018mirage}; when $Q(x,a)$ is only state dependent, $\nabla_\theta \hat{V}_\text{DR}^{\pi_\theta}(x)$ recovers the common policy gradient estimate with state-dependent baselines \citep{konda2000actor}. As such, the estimates are computed with potential variance reduction due to the critic.
Previously, \citep{huang2020importance} started with the DR estimate and derived a more general result for the on-policy first-order case. For the Hessian estimate, we expect a similar effect of variance reduction as shown in  experiments.

\paragraph{Recovering prior work on estimates to higher-order derivatives.} When applied to meta-RL, DiCE \citep{foerster2018dice} and its follow-up variants \citep{farquhar2019loaded,mania2018simple} can be seen as special cases of $\nabla_\theta^2 \hat{V}_\text{DR}^{\pi_\theta}(x)$ with different choices of the critic $Q$ when evaluated on-policy $\mu=\pi_\theta$. See Table 1 for the correspondence between prior work and their equivalent formulations under the framework of off-policy evaluation. We will discuss  detailed pseudocode in Section~\ref{sec:code}. See also Appendix~\ref{appendix:code} for more details.

\subsection{Trading-off bias and variance with Taylor expansions}

Starting from unbiased off-policy evaluation estimates $\hat{V}^{\pi_\theta}(x)$, we can directly construct unbiased estimates to higher-order derivatives by differentiating the original estimate to obtain  $\nabla_\theta^m \hat{V}^{\pi_\theta}(x)$. However, unbiased estimates can have large variance. Though it is possible to reduce variance through the critic, as we will show experimentally, this is not enough to counter the high variance due to products of IS ratios. This leads us to consider trading off bias with variance \citep{rowland2019adaptive}.
 
Since we postulate that the products of IS ratios lead to high variance, we might seek to control the number of IS ratios in the estimate. We briefly introduce Taylor expansion policy optimization (TayPO) \citep{tang2020taylor}, a natural framework to control for the number of IS ratios in the value estimate.

\paragraph{Taylor expansions of value functions.} Consider the value function $V^{\pi_\theta}(x_0)$ as a function of $\pi_\theta$. Using the idea of Taylor expansions, we can express  $V^{\pi_\theta}(x_0)$ as a sum of polynomials of $\pi_\theta-\mu$. We start by defining the $K$\textsuperscript{th}-order increment as $U_0^{\pi_\theta}(x_0)=V^\mu(x_0)$, which does not contain any IS ratio (zeroth order); and for $K\geq1$,
\begin{align}
    U_K^{\pi_\theta}(x_0) \coloneqq \mathbb{E}_{\mu} \Bigg[ \underbrace{ \sum_{t_1=0}^\infty \sum_{t_2=t_1+1}^\infty ...\sum_{t_K=t_{K-1}+1}^\infty \gamma^{t_K} \left(\Pi_{i=1}^K (\rho_{t_i}^\theta-1)\right) Q^\mu(x_{t_K},a_{t_K})}_{\hat{U}_K^{\pi_\theta}(x_0)} \Bigg].\label{eq:increment}
\end{align}
Intuitively, the $K$\textsuperscript{th}-order increment only contains product of $K$ IS ratios. Equation~\eqref{eq:increment} also yields a natural sample-based estimate $\hat{U}_K^{\pi_\theta}(x_0)$, which we will discuss later. The $K$\textsuperscript{th}-order Taylor expansion is defined as the partial sum of increments 
$V_{K}^{\pi_\theta}(x_0)\coloneqq \sum_{k=0}^K U_k^{\pi_\theta}(x_0) $.
Since $V_{K}^{\pi_\theta}(x_0)$ consists of products of up to $K$ IS ratios, it is effectively the $K$\textsuperscript{th}-order Taylor expansion of the value function. The properties are summarized as follows. 

\begin{restatable}{proposition}{taypo}\label{prop:taypo}(Adapted from Theorem 2 of \citep{tang2020taylor}.)
 Define $\left\lVert \pi - \mu \right\rVert_1\coloneqq \max_{x}\sum_{a}|\pi(a|x)-\mu(a|x)|$. Let $C$ be a constant and $\epsilon=\frac{1-\gamma}{\gamma}$. Then the following holds for all $K\geq 0 $,
\begin{align}
    V^{\pi_\theta}(x_0,g) = \underbrace{ V_{K}^{\pi_\theta}(x_0,g)}_{K\text{-th order expansion}}+ \underbrace{C(\left\lVert \pi_\theta - \mu \right\rVert_1 / \epsilon)^{K+1}}_{\text{residual}} ,\label{eq:taypo-decomposition}
\end{align}
 If $\left\lVert \pi_\theta - \mu \right\rVert_1<\epsilon$, then $V^{\pi_\theta}(x_0) = 
 \lim_{K\rightarrow\infty}V_{K}^{\pi_\theta}(x_0) = \mathbb{E}_\mu \left[  \sum_{k=0}^\infty U_k^{\pi_\theta}(x_0) \right]$.
\end{restatable}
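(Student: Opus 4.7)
The plan is to obtain the decomposition from an operator-theoretic version of the performance difference lemma and then bound the residual by a Neumann-series tail. Writing $V^\pi = (I - \gamma P^\pi)^{-1} R^\pi$ and splitting $P^\pi = P^\mu + P^{\pi-\mu}$, $R^\pi = R^\mu + R^{\pi-\mu}$ around the behavior policy $\mu$ makes the dependence on $\pi_\theta - \mu$ explicit, and the $K$-th term in the resulting series will contain exactly $K$ factors of $\pi - \mu$, matching $U_K^{\pi_\theta}$ under importance sampling with ratios $\rho_t^\theta - 1$.

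\textbf{Derivation of the increments.} Subtracting the Bellman equations $V^\pi = T^\pi V^\pi$ and $V^\mu = T^\mu V^\mu$ and splitting $\gamma P^\pi = \gamma P^\mu + \gamma P^{\pi-\mu}$ yields
\begin{align*}
V^\pi - V^\mu = M G + (M B)(V^\pi - V^\mu), \qquad M \defeq (I - \gamma P^\mu)^{-1}, \quad B \defeq \gamma P^{\pi-\mu},
\end{align*}
with $G(x) \defeq \sum_a (\pi(a|x) - \mu(a|x)) Q^\mu(x,a)$. Iterating gives the Neumann series $V^\pi - V^\mu = \sum_{k=0}^\infty (MB)^k M G$. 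I would then verify by induction on $K$ that $(MB)^{K-1} MG = U_K^{\pi_\theta}$: expanding $M = \sum_{t\ge 0} (\gamma P^\mu)^t$ and rewriting each factor of $B$ as $\gamma\, \mathbb{E}_{a\sim \mu}[(\rho^\theta - 1)\, p(\cdot\,|\,\cdot,a)]$ produces sums over ordered time tuples $t_1 < \dots < t_K$, with a single discount factor $\gamma^{t_K}$ because $Q^\mu$ already encodes the tail rewards. This matches Eqn.~\eqref{eq:increment} exactly, giving $V^\pi = V^\mu + \sum_{K\ge 1} U_K^{\pi_\theta} = \sum_{K\ge 0} U_K^{\pi_\theta}$.

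\textbf{Residual bound and convergence.} For the quantitative statement I would work in $\lVert \cdot \rVert_\infty$ on value functions: $\lVert M \rVert_\infty \le 1/(1-\gamma)$ since $\gamma P^\mu$ is substochastic, and $\lVert B \rVert_\infty \le \gamma \lVert \pi-\mu \rVert_1$ because the row $\ell_1$-norms of $P^{\pi-\mu}$ are dominated by the action-wise $\ell_1$ deviation $\lVert \pi-\mu \rVert_1$. Hence $\lVert MB \rVert_\infty \le \lVert \pi-\mu \rVert_1 / \epsilon$. The residual identity $V^\pi - V_K^{\pi_\theta} = (MB)^K (V^\pi - V^\mu)$, combined with $\lVert V^\pi - V^\mu \rVert_\infty = O(\lVert \pi-\mu \rVert_1)$ (which follows from $\lVert MG \rVert_\infty \le \lVert M \rVert_\infty \lVert Q^\mu \rVert_\infty \lVert \pi-\mu \rVert_1$), yields a residual of order $C \,(\lVert \pi_\theta - \mu \rVert_1 / \epsilon)^{K+1}$, with $C$ depending only on $R_{\max}$, $\gamma$, and a bound on $\lVert \pi-\mu \rVert_1/\epsilon$ bounded away from $1$. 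Absolute convergence of the Neumann series, and hence $V^\pi = \lim_K V_K^{\pi_\theta}$, then holds whenever $\lVert \pi_\theta - \mu \rVert_1 < \epsilon$.

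\textbf{Main obstacle.} The operator-norm bookkeeping is routine; the delicate step is the combinatorial identification $(MB)^{K-1} MG = U_K^{\pi_\theta}$. Keeping straight that only $t_K$ carries an explicit discount $\gamma^{t_K}$ (the intermediate sums being absorbed into the $M$ factors) and that the strict ordering $t_1 < \dots < t_K$ emerges from the alternation of $M$'s and $B$'s is where off-by-one errors on the time indices or extraneous $\gamma$ factors are most likely to appear, and a careful induction is the cleanest way to discharge it.
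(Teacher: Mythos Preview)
The paper does not supply its own proof of this proposition; it is stated as an adaptation of Theorem~2 of the cited TayPO paper and used as a black box for the rest of the argument. Your operator-theoretic derivation via the fixed-point identity $V^\pi - V^\mu = MG + MB(V^\pi - V^\mu)$ and the resulting Neumann tail $(MB)^K(V^\pi - V^\mu)$ is exactly the approach of that reference, and the combinatorial identification $(MB)^{K-1}MG = U_K^{\pi_\theta}$ is correct as you outlined.

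One small sharpening of your residual step: you do not actually need $\lVert \pi-\mu\rVert_1/\epsilon$ bounded away from~$1$ to obtain a $K$-independent constant~$C$. Bounding $\lVert V^\pi - V^\mu\rVert_\infty$ directly through $(I-\gamma P^\pi)^{-1}$ rather than through the $MB$-series (i.e.\ the standard simulation-lemma inequality $\lVert V^\pi - V^\mu\rVert_\infty \le R_{\max}\lVert\pi-\mu\rVert_1/(1-\gamma)^2$) holds unconditionally, and combined with $\lVert MB\rVert_\infty \le \lVert\pi-\mu\rVert_1/\epsilon$ gives the residual bound with $C = R_{\max}/(\gamma(1-\gamma))$ uniformly in~$K$.
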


\paragraph{Sample-based estimates to Taylor expansions of value functions.} As shown in Equation~\eqref{eq:increment}, $\hat{U}_K^{\pi_\theta}(x_0)$ is an unbiased estimate to $U_K^{\pi_\theta}(x_0)$. We can naturally define the sample-based estimate to the $K$\textsuperscript{th}-order Taylor expansion, called the TayPO-$K$ estimate,
\begin{align}
    \hat{V}_{K}^{\pi_\theta}(x_0) \coloneqq \sum_{k=0}^K \hat{U}_k^{\pi_\theta}(x_0).
    \label{eq:taypo}
\end{align}
The expression of $\hat{U}^{\pi_\theta}(x_0)$ contains $O(T^K)$ terms if the trajectory is of length $T$. Please refer to Appendix~\ref{appendix:taypo} for further details on computing the estimates in linear time $O(T)$ with sub-sampling. Note that  $\hat{V}_{K}^{\pi_\theta}(x_0)$ is a sample-based estimate whose bias against the value function is controlled by the residual term which decays exponentially when $\pi_\theta$ and $\mu$ are close. Similar to how we derived the unbiased estimate $\nabla_\theta^m \hat{V}_\text{DR}^{\pi_\theta}(x)$, we can differentiate through the TayPO-$K$ value estimate to produce estimates to higher-order derivatives $\nabla_\theta^m \hat{V}_{K}^{\pi_\theta}(x_0)$. 

\paragraph{Bias and variance of Hessian estimates.} TayPO-$K$ trades-off bias and variance with choices of $K$. To understand the variance, note that TayPO-$K$  limits the number multiplicative IS ratios to be $K$. Though it is difficult to compute the variance, we argue that the variance generally increases with $K$ as the number of IS ratios increase \citep{precup2001off,jiang2016doubly,munos2016safe}. We characterize the bias of TayPO-$K$ as follows.

\begin{restatable}{proposition}{taypogradonpolicy}\label{prop:taypogradonpolicy}
Assume \textbf{(A.1)} and \textbf{(A.2)} hold. Also assume $\left\lVert \pi_\theta-\mu\right\rVert_1 \leq \epsilon=(1-\gamma)/\gamma$. For any tensor $x$, define $\left\lVert x\right\rVert_\infty \coloneqq \max_i \left|x[i]\right|$.
The $K$\textsuperscript{th}-order TayPO objective produces the following bias in estimating high-order derivatives,
\begin{align}
  \left\lVert \mathbb{E}_\mu\left[\nabla_\theta^m \hat{V}_K^{\pi_\theta}\right](x_0) - \nabla_\theta^m V^{\pi_\theta}(x_0)\right\rVert_\infty \leq \sum_{k=K+1}^\infty  \left\lVert \nabla_\theta^m U_k^{\pi_\theta}(x_0) \right\rVert_\infty .\label{eq:taypo-grad}
\end{align}
Hence the upper bound for the bias decreases as $K$ increases.
Importantly, when on-policy $\mu=\pi_\theta$, the $K$\textsuperscript{th}-order TayPO objective preserves up to $K$\textsuperscript{th}-order derivatives for any $K\geq 0$,
\begin{align}
   \mathbb{E}_\mu\left[\nabla_\theta^m \hat{V}_{K}^{\pi_\theta}(x_0) \right]= \nabla_\theta^m V^{\pi_\theta}(x_0) ,\forall m\leq K .\label{eq:taypo-grad-onpolicy}
\end{align}
\end{restatable}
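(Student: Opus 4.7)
The plan is to combine the decomposition in Proposition~\ref{prop:taypo} with an exchange of differentiation and expectation. First I apply Proposition~\ref{prop:taypo} in the regime $\lVert\pi_\theta-\mu\rVert_1\leq\epsilon$ to write $V^{\pi_\theta}(x_0)-V_K^{\pi_\theta}(x_0)=\sum_{k=K+1}^\infty U_k^{\pi_\theta}(x_0)$. The finite-horizon assumption $H<\infty$ forces $U_k^{\pi_\theta}(x_0)=0$ for $k>H$, so this infinite sum is actually finite and no convergence question arises at the level of the value function itself. Since $\hat{V}_K^{\pi_\theta}(x_0)=\sum_{k=0}^K\hat{U}_k^{\pi_\theta}(x_0)$ is built from unbiased estimators of each $U_k^{\pi_\theta}(x_0)$ (cf.\ Equation~\eqref{eq:increment} and Equation~\eqref{eq:taypo}), one has $\mathbb{E}_\mu[\hat{V}_K^{\pi_\theta}(x_0)]=V_K^{\pi_\theta}(x_0)$.

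Next I push $\nabla_\theta^m$ through $\mathbb{E}_\mu$. Assumption \textbf{(A.1)} supplies almost-sure $m$-fold differentiability and \textbf{(A.2)} supplies a uniform bound $\lVert\nabla_\theta^m\hat{V}_K^{\pi_\theta}\rVert_\infty<M$ in a neighborhood of $\theta$, so dominated convergence (exactly the mechanism invoked in Proposition~\ref{prop:offpolicyestimate}) gives $\mathbb{E}_\mu[\nabla_\theta^m\hat{V}_K^{\pi_\theta}(x_0)]=\nabla_\theta^m V_K^{\pi_\theta}(x_0)$. Differencing with $\nabla_\theta^m V^{\pi_\theta}(x_0)$, substituting the tail identity from the previous paragraph, and applying the triangle inequality coordinate-wise in $\lVert\cdot\rVert_\infty$ yields the stated bound~\eqref{eq:taypo-grad}.

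For the on-policy claim, I show that $\nabla_\theta^m U_k^{\pi_\theta}(x_0)$ vanishes at $\pi_\theta=\mu$ whenever $k>m$. Each increment $U_k^{\pi_\theta}$ in Equation~\eqref{eq:increment} is an expectation of a sum whose summands all contain the product $\prod_{i=1}^k(\rho_{t_i}^\theta-1)$, and on-policy each individual factor $\rho_{t_i}^\theta-1$ is identically zero. Leibniz's rule applied to $\nabla_\theta^m$ of a product of $k$ factors produces a sum of terms in which at most $m$ of the $k$ factors are differentiated; when $k>m$, every such term leaves at least one factor as $\rho_{t_i}^\theta-1$, which is zero on-policy. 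Hence $\nabla_\theta^m U_k^{\pi_\theta}(x_0)=0$ at $\pi_\theta=\mu$ for all $k\geq m+1$, and taking $K\geq m$ in~\eqref{eq:taypo-grad} kills every term of the tail, delivering~\eqref{eq:taypo-grad-onpolicy}.

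The main obstacle I expect is a clean justification of the interchange $\nabla_\theta^m\mathbb{E}_\mu=\mathbb{E}_\mu\nabla_\theta^m$ at each required order: although the finite-horizon assumption truncates each $\hat{U}_k^{\pi_\theta}$ to a polynomial in finitely many importance ratios, one still needs the local boundedness supplied by \textbf{(A.2)} together with the continuity provided by \textbf{(A.1)} to legitimately commute the derivative and the measure-theoretic expectation in a neighborhood $N(\theta)$, which is precisely where the ``mild conditions'' alluded to in Proposition~\ref{prop:offpolicyestimate} do the work. The Leibniz-rule vanishing argument for the on-policy part, by contrast, is a routine combinatorial check once the interchange is in place.
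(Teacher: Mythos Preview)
Your proposal is correct and follows essentially the same route as the paper's own proof: exchange $\nabla_\theta^m$ and $\mathbb{E}_\mu$ via the dominated-convergence argument of Proposition~\ref{prop:offpolicyestimate}, use the Taylor decomposition of Proposition~\ref{prop:taypo} together with the finite horizon $H<\infty$ to write the residual as a finite sum $\sum_{k=K+1}^\infty \nabla_\theta^m U_k^{\pi_\theta}(x_0)$, and then apply the triangle inequality. Your Leibniz-rule argument for the on-policy vanishing is in fact slightly more explicit than the paper's, which simply states that after $m$ differentiations each term still carries at least $k-m$ factors of $\rho_t^\theta-1$; the two arguments are identical in content.
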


Though IS ratios $\rho_t^\theta$ evaluate to $1$ when on-policy, they maintain the parameter dependencies  in differentiations. As such, higher-order expansions contains products of IS ratios of higher orders, and maintains the high-order dependencies on parameters more accurately.  There is a clear trade-off between bias and variance mediated by $K$. When $K$ increases, the higher-order derivatives are maintained more accurately in expectation, leading to less bias. However, the variance increases too.

\paragraph{Recovering prior work as special cases.} Recently, \citep{rothfuss2018promp} proposed a low variance curvature (LVC) Hessian estimate. This estimate is equivalent to the TayPO-$K$ estimate with $K=1$. As also noted by \citep{rothfuss2018promp}, their objective function bears similarities to first-order policy search algorithms  \citep{kakade2002approximately,schulman2015,schulman2017}, which have in fact been  interpreted as  first-order special cases of $\nabla_\theta\hat{V}_{K}^{\pi_\theta}(x) $  with $K=1$ \citep{tang2020taylor}. Importantly, based on Proposition~\ref{prop:taypogradonpolicy}, the LVC estimate only maintains the first-order dependency perfectly but introduces bias when approximating the Hessian, even when on-policy.

\paragraph{Limitations.} Though the above framework interprets a large number of prior methods as special cases, it has some limitations. For example, the derivation of Hessian estimates based on the DR estimate (Proposition~\ref{prop:offpolicyestimategrad}) involves estimates of the value function $\hat{V}_\text{DR}^{\pi_\theta}(x_t)$. In practice, when near on-policy $\pi_\theta\approx\mu$, one might replace the DR estimate $\hat{V}_\text{DR}^{\pi_\theta} $ by other value function estimate, such as plain cumulative sum of returns or TD($\lambda$), $\hat{V}_{\text{TD}(\lambda)}^{\pi_\theta}$ \citep{farquhar2019loaded,mao2019baseline} in Eqn~\eqref{eq:recursive-pg}. As such, the practical implementation might not strictly adhere to the conceptual framework.
In addition, TMAML  \citep{liu2019taming} is not incorporated as part of this framework: we show in Appendix~\ref{appendix:tmaml} that the control variate introduced by TMAML in fact biases the overall estimate. 

\section{From Hessian estimates to meta-gradient estimates}\label{sec:code}

A practical desiteratum for meta-gradient estimates is that it can be implemented in a scalable way using auto-differentiation frameworks \citep{abadi2016tensorflow,paszke2017automatic}. Below, we discuss how this can be achieved.

\subsection{Auto-differentiating off-policy evaluation estimates for Hessian estimates}
\label{sec:code}
In practice, we seek Hessian estimates that could be implemented with the help of an established framework, such as auto-differentiation libraries \citep{abadi2016tensorflow,paszke2017automatic}. Now we discuss how to conveniently implement ideas discussed in the previous section.

\begin{algorithm}[h]
\label{algo:evaluation-subroutine}
\begin{algorithmic}
\REQUIRE \textbf{Inputs}: Trajectory $(x_t,a_t,r_t)_{t=0}^T$, target policy $\pi_\theta$, behavior policy $\mu$, (optional) critic $Q$.
\STATE Initialize $\hat{V}=Q(x_T,\pi_\theta(x_T),g)$.
\FOR{$t=T-1,\dots0$}
\STATE Compute IS ratio $\rho_t^\theta=\pi_\theta(a_t|x_t)/\mu(a_t|x_t)$.
\STATE Recursion: $\hat{V}\leftarrow Q(x_t,\pi_\theta(a_t),g) + \gamma \rho_t^\theta (r_t + \gamma Q(x_{t+1},\pi_\theta(x_{t+1}),g) - Q(x_t,a_t)) + \gamma \rho_t^\theta \hat{V}$.
\ENDFOR
\STATE Output $\hat{V}$ as an estimate to $V^{\pi_\theta}(x_0,g)$.
\caption{Example: an off-policy evaluation subroutine for computing the DR estimate}
\end{algorithmic}
\end{algorithm}

\paragraph{Auto-differentiating the estimates.} We can abstract the off-policy evaluation  as a function $\text{eval}(D,\theta)$ that takes in some data $D$ and parameter $\theta$, and outputs an estimate for $V^{\pi_\theta}(x_0,g)$. In particular, $D$ includes the trajectories and $\theta$ is input via the policy $\pi_\theta$. As an example, Algorithm 2 shows that for the doubly-robust estimator, the dependency of the estimator on $\theta$ is built through the recursive computation by $\text{eval}(D,\theta)$. In fact, if we implement Algorithm 2 with an auto-differentiation framework (for example, Tensorflow or PyTorch \citep{asadi2017,jax2018github,paszke2017automatic}), the higher-order dependency of $\hat{V}$ on $\theta$ is maintained through the computation graph. We can compute the Hessian estimate by directly differentiating through the function output
$
    \nabla_\theta^K \hat{V}=\nabla_\theta^K \text{eval}(D,\theta)
\approx \nabla_\theta^K V^{\pi_\theta}(x_0,g)$. In Appendix~\ref{appendix:code} we show how the estimates could be conveniently implemented, as in many deep RL agents (see, e.g., \citep{munos2016safe,wang2016,espeholt2018impala,kapturowski2018recurrent}). We also show concrete ways to compute estimates with TayPO-$K$ based on \citep{tang2020taylor}.

\subsection{Practical implementations of meta-gradient estimates}

In Equation~\eqref{eq:maml-hvp}, we write the meta-gradient estimate as a product between an Hessian estimate and a policy gradient. In practice, the meta-gradient estimate is computed via Hessian-vector products to avoid explicitly computing the Hessian estimate of size $D^2$. As such, the meta-gradient estimate could be computed by auto-differentiating through a scalar objective. See Appendix~\ref{appendix:code} for details. 

\paragraph{Bias and variance of meta-gradient estimates.} Intuitively, the bias and variance of the Hessian estimates translate into bias and variance of the downstream meta-gradient estimates. Prior work has showed that low variance of meta-gradient estimates lead to faster convergence \citep{fallah2020convergence,ji2020multi}. However, it is not clear how the bias (such as bias introduced by the Hessian estimates, or the bias due to correlated estimates) theoretically impacts the convergence. We will study empirically the effect of bias and variance in experiments, and leave further theoretical study to future work.

\section{Experiments}
\label{sec:exp}
We now carry out several empirical studies to complement the framework developed above. In Section~\ref{sec:insights}, we use a tabular example to investigate the bias and variance trade-offs of various estimates, to assess the validity of our theoretical insights. We choose a tabular example because it is straightforward to compute exact higher-order derivatives of value functions and make comparison. In Section~\ref{sec:balls} and Section~\ref{sec:more}, we apply the new second-order estimate in high-dimensional meta-RL experiments, to assess the potential performance gains in a more practical setup.
Though we can compute TayPO-$K$ order estimates for general $K$, we focus on $K\leq 2$ in experiments. Below, we also address TayPO-$1$ and TayPO-$2$ estimates as the first and second-order estimates respectively.

\subsection{Investigating the bias and variance trade-off of different estimates}
\label{sec:insights} 
We study the bias and variance trade-off of various estimates using a tabular exmaple. We consider random MDPs with $|\mathcal{X}|=10,|\mathcal{A}|=5$. The transition matrix of the MDPs are sampled from a Dirichlet distribution. See Appendix~\ref{appendix:exp} for further details. The policy $\pi_\theta$ is parameterized as $\pi_\theta(a|x)=\exp(\theta(x,a))/\sum_{b} \exp(\theta(x,b))$. The behavior policy $\mu$ is uniform and $\theta$ is initialized so that $\theta(x,a)=\log \pi(a|x)$ where $\pi = (1-\epsilon) \mu+\epsilon \pi_d$ for some deterministic policy $\pi_d$ and parameter $\epsilon\in [0,1]$. The hyper-parameter $\epsilon$ measures the off-policyness. In this example, there is no task variable. As performance metrics, we measure the component-wise correlation between the true derivatives $\nabla_\theta^K V^{\pi_\theta}(x_0)$ (computed via an oracle) where $x_0$ is a fixed starting state, and its estimates $\nabla_\theta^K \hat{V}^{\pi_\theta}(x_0)$, as commonly used in prior work \citep{foerster2018dice,farquhar2019loaded,mao2019baseline}. The estimates are averaged across $N$ samples. We study the effect of different choices of the off-policy estimate $\hat{V}^{\pi_\theta}$, as a function of off-policyness $\epsilon$ and sample size $N$. We report results with $\text{mean}\pm\text{standard error}$ over $10$ seeds.

\begin{figure}[t]
    \centering
    \subfigure[Gradient - off-policy ]{\includegraphics[keepaspectratio,width=.24\textwidth]{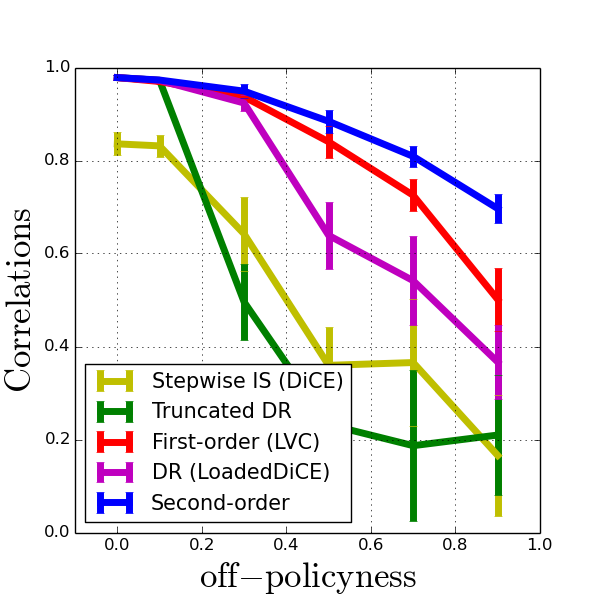}}
    \subfigure[Hessian - off-policy ]{\includegraphics[keepaspectratio,width=.24\textwidth]{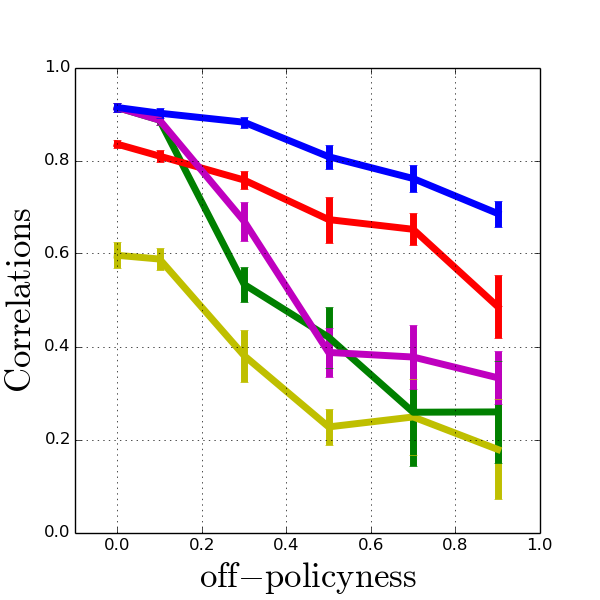}}
    \subfigure[Hessian - sample size ]{\includegraphics[keepaspectratio,width=.24\textwidth]{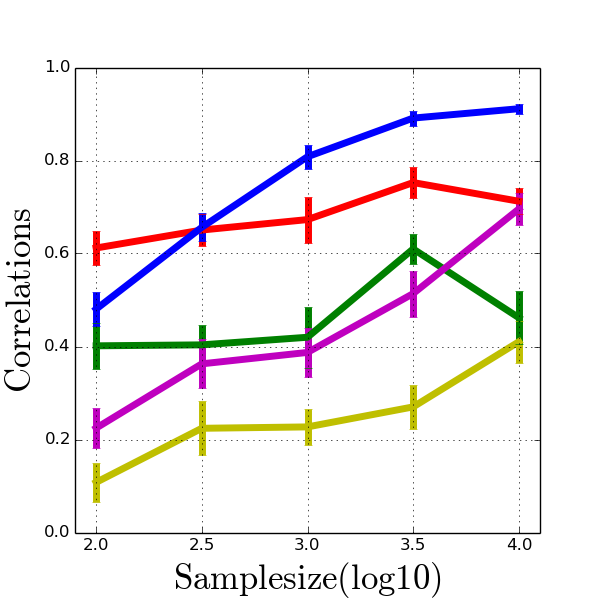}}
    \subfigure[2-$D$ control: training  ]{\includegraphics[keepaspectratio,width=.24\textwidth]{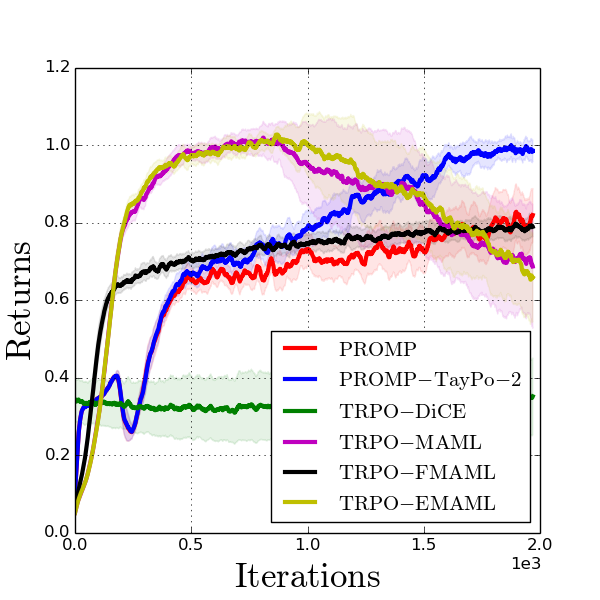}}
    \caption{Fig (a)-(b): performance measure as a function of off-policyness measured by $\epsilon=\left\lVert \pi_\theta-\mu \right\rVert_1$. (a) shows results for gradient estimation and (b) shows Hessians. Plots show the accuracy measure between the estimates and the ground truth. Overall, the second-order estimate achieves a better bias and variance trade-off. Fig (c): performance measure as a function of off-policyness measured by sample size $N$ for Hessians. Fig (d): training curves for the 2-$D$ control environment. The second-order estimate is generally more robust. All curves are averaged over $10$ runs. The left three plots use share the same legends.}
    \label{fig:tabular}
\end{figure}

\paragraph{Effect of off-policyness.}
In Figure~\ref{fig:tabular}(a) and  (b), we let $N=1000$ and show the performance as a function of $\epsilon$. When $\epsilon$ increases (more off-policy), the accuracy measures of most estimates decrease. This is because off-policyness generally increases both bias and variance (large IS ratios $\rho^\theta$). At this level of sample size, the performance of the second-order estimate degrades more slowly than other estimates, making it the dominating estimate across all values of $\epsilon$.
We also include truncated DR estimate using $\text{min}(\rho_t^\theta,\bar{\rho})$ as a baseline inspired by V-trace \citep{munos2016safe,espeholt2018impala}. The truncation is motivated by controlling the variance of the overall estimate. However, the estimate is heavily biased and does not perform well unless $\pi_\theta\approx \mu$. See Appendix~\ref{appendix:code} for more.

Consider the case when $\epsilon=0$ and the setup is on-policy. When estimating the policy gradient, almost all estimates converge to the optimal level of accuracy, except for the step-wise IS estimate, where the variance still renders the performance sub-optimal. However, when estimating the Hessian matrix, the first-order estimate converges to a lower accuracy than both second-order estimate and DR estimate. This validates our theoretical analysis, as both the second-order estimate and DR estimate are unbiased when on-policy.

\paragraph{Effect of sample size.} Figure~\ref{fig:tabular}(c) shows the accuracy measures as a function of sample size $N$ when fixing $\epsilon=0.5$. Note that since sample sizes directly influence the variance, the results show the variance properties of different estimates. When $N$ is small, the first-order estimate dominates due to smaller variance; however, when $N$ increases, the first-order estimate is surpassed by the second-order estimate, due to higher bias. For more results and ablation on high-dimensional environments, see Appendix~\ref{appendix:exp}.

\subsection{High-dimensional meta-RL problems}
Next, we study the practical gains entailed by the second-order estimate in high-dimensional meta-RL problems. We first introduce a few important algorithmic baselines, and how the second-order estimate is incorporated into a meta-RL algorithm.
\paragraph{Baseline algorithms.}
All baseline algorithms use plain stochastic gradient ascent as the inner loop optimizer: $\theta'=\theta+ \eta \nabla_\theta \hat{V}^{\pi_\theta}(x_0,g)$ where $g\sim p_\mathcal{G}$ is a sampled goal and $\hat{V}^{\pi_\theta}(x_0,g)$ is a  sample-based estimate of policy gradients averaged over $n$ trajectories. Different algorithms differ in how the inner loop loss is implemented, such that auto-differentiation produces different Hessian estimates:
these include TRPO-DiCE \citep{foerster2018dice,farquhar2019loaded,mao2019baseline}, TRPO-MAML \citep{finn2017model}, TRPO-FMAML \citep{finn2017model}, TRPO-EMAML \citep{al2017continuous,stadie2018some}. Please refer to Appendix~\ref{appendix:exp} for further details. Note despite the name, TRPO-DiCE baseline uses DR estimate to estimate Hessians. We implement the second-order estimate using the proximal meta policy search (PROMP) \citep{rothfuss2018promp} as the base algorithm. By default, PROMP uses the first-order estimate. Our new algorithm is named PROMP-TayPO-$2$.

\subsubsection{Continuous control in 2D environments}
\label{sec:balls}

\paragraph{Environment.} We consider a simple $2$-D navigation task introduced in \citep{rothfuss2018promp}. The state $x_t$ is the coordinate of a ball placed inside a room, the action $a_t$ is the direction in which to push the ball. The goal $g \in \mathbb{R}^4$ is an one-hot encoding of which corner of the room contains positive rewards. With $3$ adaptation steps, the agent should ideally navigate to the desired corner indicated by $g$.

\paragraph{Training performance.}  In Figure~\ref{fig:tabular}(d), we show the performance curves of various baseline algorithms. Though MAML and EMAML learns quickly during the initial phase of training, they ultimately become unstable. TRPO-DiCE generally underperforms other methods potentially due to bias. On the other hand, FMAML and PROMP both reduce variance at the cost of bias, but they both achieve a slightly lower level of performance compared to PROMP-TayPO-$2$. Overall, PROMP-TAyPO-$2$ achieves much more stable training curves compared to others, potentially owing to the better bias and variance trade-off in the Hessian estimates.

\subsubsection{Large scale locomotion experiments}
\label{sec:more}

\paragraph{Environments.} We consider the set of meta-RL tasks based on simulated locomotion in MuJoCo \citep{todorov2012}. Across these tasks, the states $x_t$ consist of robotic sensory inputs, the actions $a_t$ are torque controls applied to the robots. The task $g$ is defined per environment: for example, in random goal environment, $g\in\mathbb{R}^2$ is a random $2$-d goal location that the robot should aim to reach.

\paragraph{Experiment setup.} We adapt the open source code base by \citep{rothfuss2018promp} and adopt exactly the same experimental setup as \citep{rothfuss2018promp}. At each iteration, the agent samples $n=40$ task variables. For each task, the agent carries out $K=1$ adaptation computed based on $B=20$ trajectories sampled from the environment, each of length $T=100$. See Appendix~\ref{appendix:exp} for further details on the architecture and other hyper-parameters.  We report averaged results with $\text{mean}\pm\text{std}$ over $10$ seeds.

\begin{figure}[h]
    \centering
    \subfigure[\textbf{Ant goal}]{\includegraphics[keepaspectratio,width=.24\textwidth]{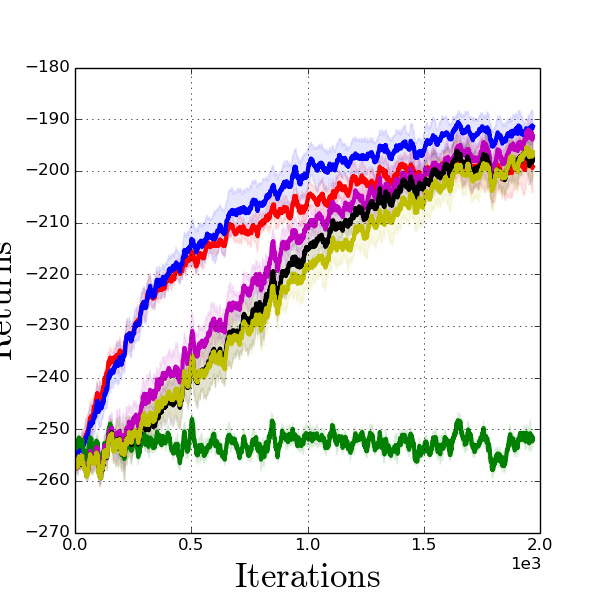}}
    \subfigure[\textbf{HalfCheetah }]{\includegraphics[keepaspectratio,width=.24\textwidth]{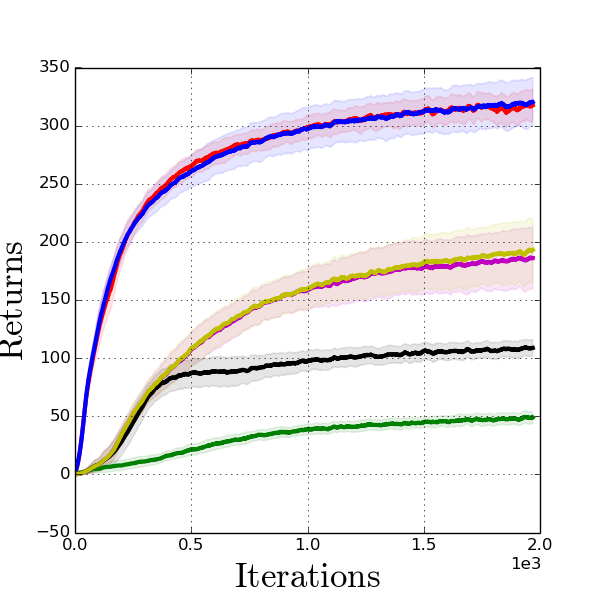}}
    \subfigure[\textbf{Ant }]{\includegraphics[keepaspectratio,width=.24\textwidth]{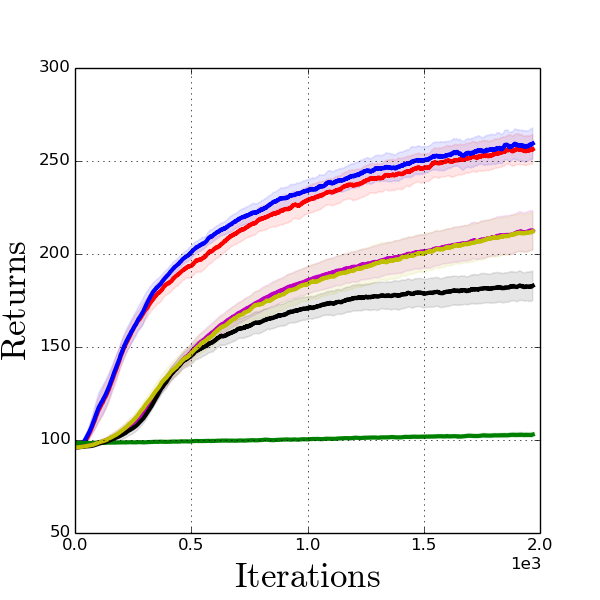}}
    \subfigure[\textbf{Walker2D }]{\includegraphics[keepaspectratio,width=.24\textwidth]{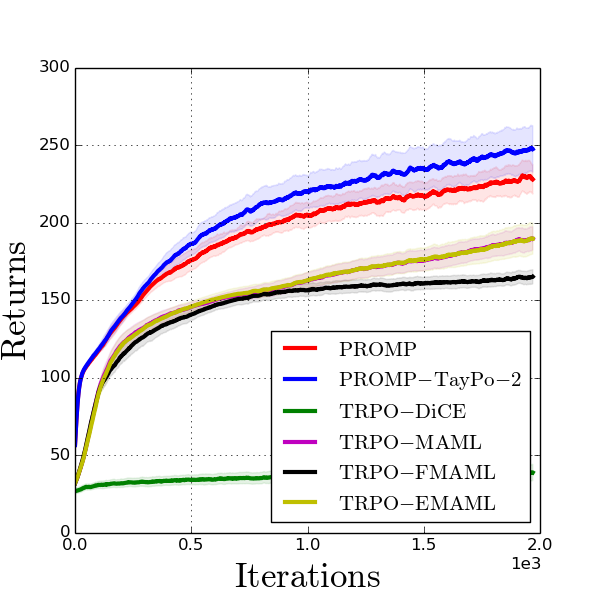}}
    \caption{Comparison of baselines over a range of simulated locomotion tasks. For task (b)-(d), the goal space consists of $2$-d random direction in which the robot should run to obtain positive rewards. For task (a), the goal space is a $2$-d location on the plane. Each curve shows the $\text{mean}\pm \text{std}$ across $5$ seeds. Overall, the second-order estimate achieves marginal gains over the first-order estimate.}
    \label{fig:meta}
\end{figure}

\paragraph{Results.} The training performance of different algorithmic baselines are shown in Figure~\ref{fig:meta}. Comparing TRPO-DiCE, TRPO-MAML and PROMP: we see that the results are compatible with those reported in \citep{rothfuss2018promp}, where PROMP outperforms TRPO-MAML, while TRPO-DiCE generally performs the worst potentially due to high variance in the gradient estimates. As a side observation, TRPO-FMAML generally underperforms TRPO-MAML, which implies the necessity of carrying out approximations to the Hessian matrix beyond the identity matrix. PROMP-TayPO-$2$ slightly outperforms PROMP in a few occasions, where the new algorithm achieves slightly faster learning speed and sometimes higher final performance. However, overall, we see that the empirical gains are marginal. This implies that under the default setup of these meta-RL experiments, the variance might be a major factor in gradient estimates, and the first-order estimate is near optimal compared to other estimates. See Appendix~\ref{appendix:exp} for additional experiments.

\section{Conclusion}
We have unified a number of important prior work on meta-gradient estimations for model-agnostic meta-RL. Our analysis entails the derivations of prior methods based on the unifying framework of differentiating through off-policy evaluation estimates. This framework provides a principled way to reason about the bias and variance in the higher-order derivative estimates of value functions, and opens the door to a new family of estimates based on novel off-policy evaluation estimates. As an important example, we have theoretically and empirically studied the properties of the family of TayPO-based estimates. It is worth noting that this framework further suggests any future advances in off-policy evaluation could be conveniently imported into potential improvements in meta-gradient estimates. As future work, we hope to see the applications of such principled estimates in broader meta-RL applications.

\paragraph{Acknowledgement.} The authors thank David Abel for reviewing an early draft of this work and providing very useful feedback. Yunhao and Tadashi are  thankful for the Scientific Computation
and Data Analysis section at the Okinawa Institute of Science and Technology (OIST), which maintains a cluster we
used for many of our experiments.

\bibliographystyle{unsrt}
\bibliography{your_bib_file}

\newpage
\appendix

\section{Bias with practical estimations of meta-gradients}
\label{appendix:mamlbias}

Recall the MAML objective defined in Eqn~\eqref{eq:maml}. The adapted parameter $\theta'=\theta+U(\theta,g)$ is computed with the expected policy gradient update $U(\theta,g)=\nabla_\theta V^{\pi_\theta}(x_0,g)$. This makes it difficult to construct fully unbiased estimates to the MAML gradient. We use a following example to show the intuitions.

 Consider a scalar objective $f(x)$ with input $x$. If it is possible to construct unbiased estimate to $x$, i.e., with $X$ such that $\mathbb{E}[X]=x$. It is difficult to construct unbiased estimates to $f(x)$ because $\mathbb{E}\left[f(X)\right]\neq f(x)$ unless $f$ is linear.

Conceptually, we can think of the argument $x$ here as the updated parameter resulting from expected updates $\theta'\coloneqq \theta+\eta\nabla_\theta V^{\pi_\theta}(x_0,g)$. It is convenient to construct unbiased estimate of this parameter because it is convenient to build unbiased estimate to the policy gradient $\nabla_\theta V^{\pi_\theta}(x_0,g)$. However, in our case, to compute the meta RL gradients, we need to evaluate the policy gradient at $\theta'$: $\nabla_{\theta'} V^{\pi_{\theta'}}(x_0,g)$, which is usually an highly non-linear function of $\theta'$. Using the notation of the above scalar objective example, we can construct a vector valued function: $f:\theta \mapsto \nabla_\theta V^{\pi_\theta}(x_0,g)$. Though it is straightforward to construct unbiased estimates $\hat{\theta'}$ to $\theta'$ with sampled trajectories, it is not easy to estimate $f(\theta')$ in an unbiased way, as a direct plug in $f(\hat{\theta'})$ would be biased.

\section{Proof on the bias of TMAML}\label{appendix:tmaml}
Below, we adopt the trajectory-based  notation of TMAML \citep{liu2019taming}. Let $\rho_t^\theta\coloneqq\frac{\pi_\theta(a_t|x_t)}{\text{sg}(\pi_\theta(a_t|x_t))}$, where the operation $\text{sg}(x)$ removes the dependency of $x$ on parameter $\theta$. In other words, $\nabla_\theta \text{sg}(x)=0$. It is worth noting that the stop gradient notations are equivalent to the derivation that defines $\rho_t^\theta=\frac{\pi_\theta(a_t|x_t)}{\mu(a_t|x_t)}$ with a fixed behavior policy $\mu$, and later sets $\mu=\pi_\theta$, as done in the main paper. TMAML \citep{liu2019taming} proposed the following baseline objective in the undiscounted finite horizon case with horizon $H<\infty$,
\begin{align*}
    J = \sum_{t=0}^{H-1}\left(1-\left(\Pi_{s=0}^t \rho_s^\theta\right)\right) (1-\rho_t^\theta) b(x_t),
\end{align*}
where $b(x_t)$ is a baseline function that depends on state $x_t$. The major claim from TMAML \citep{liu2019taming} is that 
$
    \mathbb{E}_{\pi_\theta}\left[\nabla_\theta^2 J\right] = 0$, which implies that adding $J$ to the original DICE objective \citep{foerster2018dice} might lead to variance reduction because it serves as a control variate term. However, below we show \begin{align*}
        \mathbb{E}_{\pi_\theta}\left[\nabla_\theta^2 J\right] \neq 0.
    \end{align*}

The proof proceeds in deriving the explicit expression for $\nabla_\theta^2 J$. We derive the same expression as that in the Appendix C of the original paper \citep{liu2019taming}, i.e.,
\begin{align}
    \nabla_\theta^2 J = 2\sum_{t=0}^{H-1} \nabla_\theta \log \pi_\theta(a_t|x_t) \left(\sum_{s=t}^{H-1} \nabla_\theta \log \pi_\theta (a_s|x_s) b(x_s)\right)^T.
    \label{eq:taming}
\end{align}
Following the proof of \citep{liu2019taming}, it is then straightforward to show that 
\begin{align}
    \mathbb{E}_{\pi_\theta}\left[ 2\sum_{t=0}^{H-1} \nabla_\theta \log \pi_\theta(a_t|x_t) \left(\sum_{s=t+1}^{H-1} \nabla_\theta \log \pi_\theta (a_s|x_s) b(x_s)\right)^T\right]=0.\label{eq:proof-taming}
\end{align}
However, note the difference between Eqn~\eqref{eq:taming} and Eqn~\eqref{eq:proof-taming} lies in the summation $s=t$ instead of $s=t+1$. Accounting for this difference, we have 
\begin{align*}
   \mathbb{E}_{\pi_\theta}\left[\nabla_\theta^2 J\right]  =\mathbb{E}_{\pi_\theta}\left[  2\sum_{t=0}^{H-1} \nabla_\theta \log \pi_\theta(a_t|x_t) 
    \left( \nabla_\theta \log \pi_\theta(a_t|x_t) \right)^T b(x_t) \right],
\end{align*}
which in general does not evaluate to zero. In fact, the bias of $\nabla_\theta^2 J $ is clear if we take the special case $H=1$. In this case, it is more straightforward to derive
\begin{align*}
   \mathbb{E}_{\pi_\theta}\left[\nabla_\theta^2 J\right]  = \mathbb{E}_{\pi_\theta}\left[ 2 \nabla_\theta \log \pi_\theta(a_0|x_0) \left(\nabla_\theta \log \pi_\theta(a_0|x_0)\right)^T b(x_0) \right].
\end{align*}
As a result, we showed that TMAML \citep{liu2019taming} might achieve variance reduction by introducing baselines to the Hessian estimates of DiCE \citep{foerster2018dice}, but at the cost of bias.

\section{Further discussions on the assumptions}
\label{appendix:assumption}

In this section, we examine if the assumptions \textbf{(A.1)} and \textbf{(A.2)} are realistic. 

Both assumptions depend on particular functional form of the estimates $\hat{V}^{\pi_\theta}$. In general, we might assume that the estimates do not explicitly depend on $\theta$. Instead, they depend on $\theta$ via $\pi_\theta$. This involves a two-stage parameterization: $\theta\mapsto \pi_\theta$ and $\pi_\theta\mapsto \hat{V}^{\pi_\theta}$.
The two assumptions \textbf{(A.1)} and \textbf{(A.2)} can be realized by imposing constraints on these two parameterizations, as well as the off-policyness of $\pi_\theta$ relative to $\mu$, as discussed below.

\paragraph{Off-policyness.} In general, we might want to assume the ratios are bounded $\rho_t^\theta<R$ for constant some $R<\infty$. This is a common assumption. In our framework, we usually apply the estimates within a trust region optimization algorithm \citep{schulman2015,schulman2017}, this naturally proIS duces a bound on the ratios.

\paragraph{Parameterization $\theta\mapsto \pi_\theta$.} We seek parameterizations where $\nabla_\theta \rho_t^\theta$ are bounded. This can be achieved by bounding $\rho_t^\theta$ and $\nabla_\theta \log \pi_\theta(a|x)$. If we consider a tabular representation with softmax parameterization $\pi(a|x)\propto \exp(\theta(x,a))$. Under this parameterization, we can show $|\nabla_\theta^m \log \pi_\theta(a|x)|<M$ are bounded for all $(x,a)$ and all $\theta$.

\paragraph{Parameterization $\pi_\theta \mapsto \hat{V}^{\pi_\theta}$.} 
We want this parameterization to be sufficiently smooth.
In the examples we consider, TayPO-$K$ clearly satisfies this assumption because it is a polynomial in $\pi_\theta$. For DR, this assumption is satisfied when the MDP terminates within a finite horizon of $H<\infty$, such that the estimator contains polynomials of $\pi_\theta$ with order at most $H$.

\section{Proof of results in the main paper}
\label{appendix:proof}

\offpolicyestimate*

\begin{proof}
The two assumptions along with the unbiasedness of the estimates, allow us to exchange $m$\textsuperscript{th}-order derivatives and the expectation, and hence leading to the unbiasedness of the derivate estimates. The proof is similar to the exchange techniques used in \citep{huang2020importance} to show the unbiasedness of the first-order derivatives of DR estimates.

We proceed the argument with induction. Assume we have
\begin{align*}
    \mathbb{E}_\mu\left[\nabla_\theta^i \hat{V}^{\pi_\theta}(x_0,g)\right] = \nabla_\theta^i V^{\pi_\theta}(x_0,g),
\end{align*}
for some $i$. To define the $(i+1)$-th order derivative, we differentiate further through the $i$-th order derivative. Consider some particular component of the $(i+1)$-th order derivative, obtained by taking the derivative with respect to variable $\theta_L$. We now denote this component of the $(i+1)$-th order derivative as $D^{(i+1)}_L[\theta]$ evaluated at $\theta$. Let $D^{(i)}[\theta]$ denote the $i$-th order derivative (a tensor) evaluated at $\theta$. Also define $e_L\in\mathbb{R}^{D}$ as the one-hot vector such as its $L$-th component is one. By definition,
\begin{align*}
    D_L^{(i+1)} \coloneqq \lim_{h\rightarrow 0}\frac{D^{(i)}[\theta+e_L\cdot h] - D^{(i)}[\theta]}{h},
\end{align*}
we also denote the unbiased estimate to $D^{(i)}[\theta ]$ as $\hat{D}^{(i)}[\theta]$. The new estimate is
\begin{align*}
    \hat{D}_L^{(i+1)} \coloneqq \lim_{h\rightarrow 0}\frac{\hat{D}^{(i)}[\theta+e_L\cdot h] - \hat{D}^{(i)}[\theta]}{h}.
\end{align*}
Now we seek to establish that $\mathbb{E}_\mu\left[\hat{D}^{(i+1)}_L\right]=D^{(i+1)}_L$. Note that this is equivalent to showing
\begin{align*}
    \mathbb{E}_\mu\left[\lim_{h\rightarrow 0}\frac{\hat{D}^{(i)}[\theta+e_L\cdot h] - \hat{D}^{(i)}[\theta]}{h}\right]  = D_L^{(i+1)}.
\end{align*}
Note that with the RHS, we can use the definition along with unbiasedness of the $i$-th order derivatives
\begin{align*}
    D_L^{(i+1)} = \lim_{h\rightarrow 0 }\frac{\mathbb{E}_\mu\left[\hat{D}^{(i)}[\theta+e_L\cdot h]\right] - \mathbb{E}_\mu\left[\hat{D}^{(i)}[\theta]\right]}{h}
\end{align*}
Combining the new RHS into a single expectation, \textbf{(A.1)} entails the application of the mean value theorem,
\begin{align*}
    \lim_{h\rightarrow 0} \mathbb{E}_\mu\left[ \frac{\hat{D}^{(i)}[\theta+e_L\cdot h] - \hat{D}^{(i)}[\theta]}{h}\right]  = \lim_{h\rightarrow 0} \mathbb{E}_\mu \left[\hat{D}^{(i+1)}[\theta+e_L\cdot h\cdot \eta]\right],
\end{align*}
for some $\eta\in(0,1)$. Due to \textbf{(A.2)}, we can use dominated convergence theorem to exchange the limit and the expectation,
\begin{align*}
     \lim_{h\rightarrow 0} \mathbb{E}_\mu \left[\hat{D}^{(i+1)}[\theta+e_L\cdot h\cdot \eta]\right] =  \mathbb{E}_\mu  \left[\lim_{h\rightarrow 0}\hat{D}^{(i+1)}[\theta+e_L\cdot h\cdot \eta]\right] = \mathbb{E}_\mu\left[\hat{D}^{(i+1)}[\theta]\right].
\end{align*}
This proves the case for $(i+1)$-th order derivative. The base case holds for $i=0$ and we have the assumptions hold for all $0\leq i\leq m-1$. This concludes the proof of the theorem.

\end{proof}

\offpolicyestimategrad*
\begin{proof}
Starting from the definition of the DR estimate in Eqn~\eqref{eq:db-estimate}, which we recall here
\begin{align*}
    \hat{V}^{\pi_\theta}_\text{DR}(x_t,g) = Q(x_t,\pi_\theta(x_t),g) +   \rho_t^\theta \delta_t + \gamma \rho_t^\theta \left(\hat{V}^{\pi_\theta}_\text{DR}(x_{t+1},g) - Q(x_{t+1},\pi_\theta(x_{t+1}),g)\right).
\end{align*}

Note that both sides of the equations are functions of $\pi_\theta$. Since the DR estimate holds for all $\pi_\theta$ (assuming $\mu$ has larger support than $\pi_\theta$), and both sides are differentiable functions of $\theta$. We can differentiate both sides of the equation with respect to $\theta$, to yield its $m$\textsuperscript{th}-order derivatives. This produces the gradient estimates and the Hessian estimates accordingly, both in recursive forms.

Since \citep{huang2020importance} already provides a similar derivation in the first-order case,  we focus on the second-order. Given Eqn~\eqref{eq:recursive-pg}, we can further differentiate both sides of the equation by $\theta$. The RHS has three terms from Eqn~\eqref{eq:recursive-pg}. We rewrite the expression:

\paragraph{The first term.} This term produces a single term
$
    \nabla_\theta^2 Q(x_t,\pi_\theta(x_t),g)
$.

\paragraph{The second term.} Note a few useful facts: $\nabla_\theta \rho_t^\theta=\rho_t^\theta \nabla_\theta \log \pi_t$, $\nabla_\theta \delta_t = \gamma \nabla_\theta \hat{V}_\text{DR}^{\pi_\theta}(x_{t+1},g)$. This produces
\begin{align*}
    \nabla_\theta \left(\rho_t^\theta\delta_t \nabla_\theta\log\pi_t\right) = \rho_t^\theta \nabla_\theta \log \pi_t \delta_t \nabla_\theta \log \pi_t^T + \gamma \rho_t^\theta \nabla_\theta \log \pi_t \nabla_\theta \hat{V}_\text{DR}^{\pi_\theta}(x_{t+1},g)^T + \rho_t^\theta \delta_t \nabla_\theta^2 \log \pi_t.
\end{align*}
\paragraph{The third term.} Finally, the third term produces 
\begin{align*}
\gamma\rho_t^\theta\nabla_\theta^2 \hat{V}_\text{DR}^{\pi_\theta}(x_{t+1},g) + \gamma\rho_t^\theta \nabla_\theta   \hat{V}_\text{DR}^{\pi_\theta}(x_{t+1},g) \nabla_\theta \log \pi_t^T. \end{align*} 
Combining all three terms produces the recursive form of the DR Hessian estimates.

\end{proof}

\taypogradonpolicy*
\begin{proof}

We can express the residual of the derivatives as 
\begin{align*}
    \mathbb{E}_\mu\left[\nabla_\theta^m \hat{V}^{\pi_\theta}_{K}(x_0,g)\right]-\nabla_\theta^m V^{\pi_\theta}(x_0,g) = \nabla_\theta^m \left(\hat{V}^{\pi_\theta}_{K}(x_0,g) - V^{\pi_\theta}(x_0,g) \right) = \nabla_\theta^m \left( \sum_{k=K+1}^\infty U_k^{\pi_\theta}(x_0,g)\right).
\end{align*}
Above, the first equality comes from the unbiasedness of the estimates as well as the exchangability between derivatives and expectations, following similar arguments as those in Proposition~\ref{prop:offpolicyestimate}. The second equality comes from the Taylor expansion equality in Proposition~\ref{prop:taypo}. By the assumption that the MDP terminates within an horizon of $H<\infty$, we deduce that the summation contains at most $H-K$ terms and it is valid to exchange derivatives and the summation. Eqn~\eqref{eq:taypo-grad} is hence proved by applying the triangle inequality.
\begin{align*}
    \left\lVert \mathbb{E}_\mu\left[\nabla_\theta^m \hat{V}^{\pi_\theta}_{K}(x_0,g)\right]-\nabla_\theta^m V^{\pi_\theta}(x_0,g) \right\rVert_\infty \leq \sum_{k=K+1}^\infty \left\lVert \nabla_\theta^m U_k^{\pi_\theta}(x_0,g) \right\rVert_\infty.
\end{align*}
When on-policy, we plug in $\pi_\theta=\mu$. Since $K+1>m$, this implies that
after differentiating $U_k^{\pi_\theta}(x_0,g)$ for a total of $m$ times, each term contains at least $k-m$ terms of $\rho_t^\theta-1$ for some $t$. Since $K+1>m$, this means $U_k^{\pi_\theta}(x_0,g)=0$ for all indices $k$ within the summation. Hence we have zeros on the RHS and this shows Eqn~\eqref{eq:taypo-grad-onpolicy}.
\end{proof}

\section{Further details on sampled-based TayPO-$K$ estimates}
\label{appendix:taypo}

Please refer to the TayPO \citep{tang2020taylor} paper for further theoretical discussions. By definition, the $K$\textsuperscript{th}-order increment is
\begin{align*}
    U_K^{\pi_\theta}(x_0,g) \coloneqq \mathbb{E}_{\mu} \left[ \underbrace{ \sum_{t_1=0}^\infty \sum_{t_2=t_1+1}^\infty ...\sum_{t_K=t_{K-1}+1}^\infty \gamma^{t_K} \left(\Pi_{i=1}^K (\rho_{t_i}^\theta-1)\right) Q^\mu(x_{t_K},a_{t_K},g)}_{\hat{U}_K^{\pi_\theta}(x_0,g)} \right].
\end{align*}
Assume the trajectory is of finite length $T$ (or we can use the effective horizon $T_\gamma=1/(1-\gamma)$. The naive Monte-Carlo estimate $\hat{U}_K^{\pi_\theta}(x_0,g)$ consists of $O(T^K)$ terms. Since we usually care about $K\leq 2$, computing such a term exactly might still be tractable, as is shown later in Appendix~\ref{appendix:code}. We show in Algorithm 4 how to compute the estimates with complexity $O(T^2)$.

However, in some applications, we might seek to construct the estimates with better time complexity. The high-level idea is to achieve this through sub-sampling. Define $p_\gamma^{\pi}(x'|x)\coloneqq (1-\gamma)\sum_{t\geq 0}\gamma^t P_{\pi}(x_t=x'|x_0=x)$, where $P_{\pi}$ is the probability measure induced by $\pi$ and the MDP. We can rewrite the above into the following
\begin{align*}
    U_K^{\pi_\theta}(x_0,g) = \mathbb{E}_{t_1,t_2...t_K}\left[\left(\Pi_{i=1}^K (\rho_{t_i}^\theta-1)\right) Q^\mu(x_{t_K},a_{t_K},g)\right],
\end{align*}
where the sequence of states are sampled  
as $x_{t_{i+1}} \sim p(\cdot|x_i',a_i'), a_i'\sim \mu(\cdot|x_i'), x_i'\sim  p_\gamma^\mu(\cdot|x_{t_i})$ and $x_{t_0}=x_0$. Note that the above procedure could be achieved by first generating a full trajectory under $\mu$, and then sub-sampling random times along the trajectory. As such, the estimate takes linear time to compute, at the cost of potentially larger variance.

\section{High-level code for implementations of Hessian estimates and meta-gradients}
\label{appendix:code}

Below, we introduce a few important details on how to convert off-policy evaluation estimates into Hessian estimates, with the help of auto-diff.

\subsection{Implementing Hessian estimates with off-policy evaluation subroutines}
In Figure~\ref{fig:code}, we show a high-level JAX implementation of estimating Hessians using off-policy evaluiation subroutines. The pseudocode assumes access to some well-established off-policy evaluation functions, as are commonly implemented in off-policy RL algorithms such as ACER \citep{wang2016}, Retrace \citep{munos2016safe}, IMPALA \citep{espeholt2018impala}, R2D2 \citep{kapturowski2018recurrent} and so on. The function needs to be written in auto-differentiation libraries, such that after the computations, value function estimates' parameter dependencies are naturally maintained. Then this pipeline could be directly implemented as part of a meta-RL algorithm. 

\begin{figure}[h]
    \centering
    \includegraphics[keepaspectratio,width=.95\textwidth]{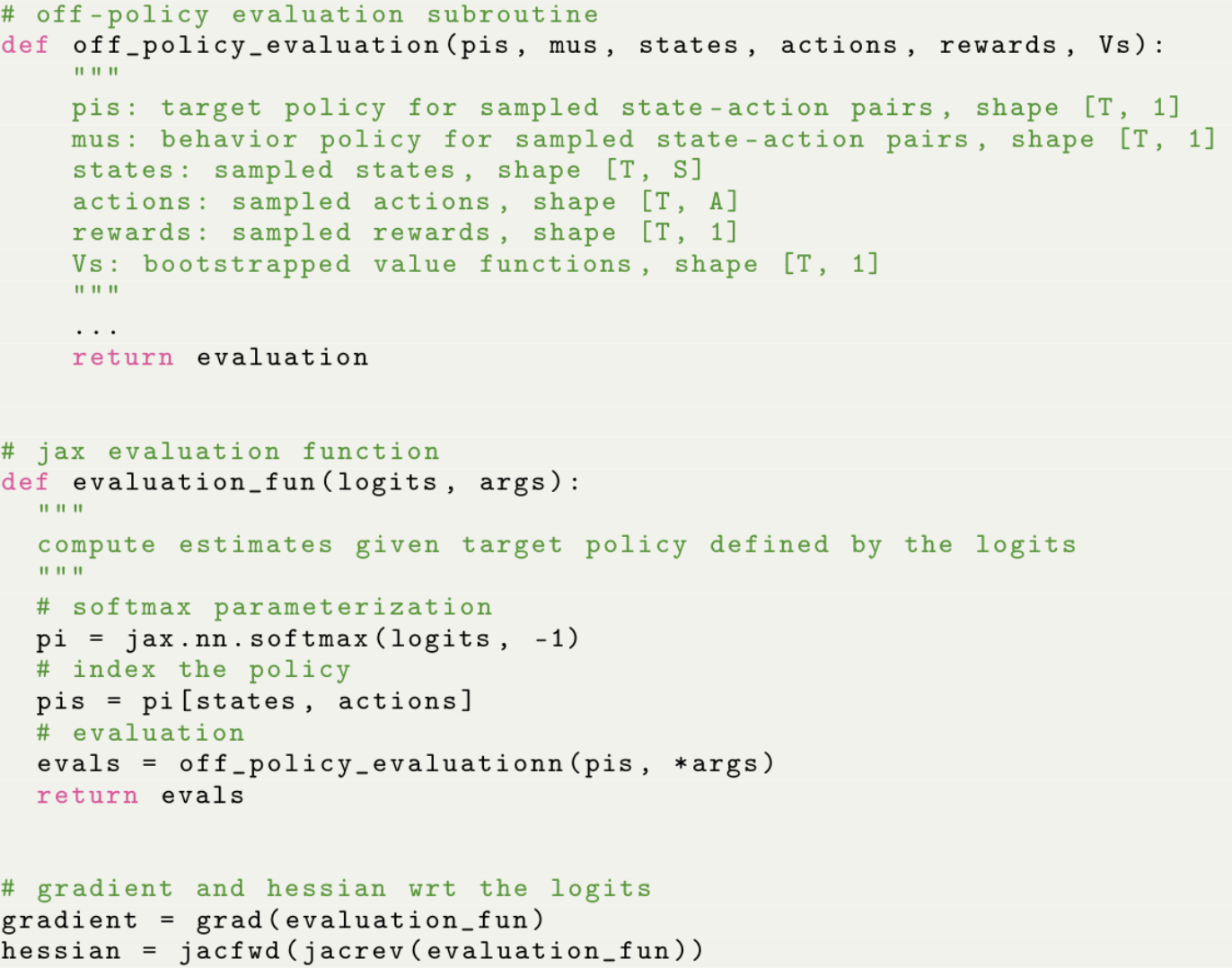}
    \caption{JAX-based high-level code for the implementation of Hessian estimates. We can easily convert any established trajectory-based off-policy evaluation subroutine into estimates of Hessian matrix, by auto-differentiating through the estimates. This can be implemented in any auto-differentiation frameworks. }
    \label{fig:code}
\end{figure}

\subsection{Examples of off-policy evaluation estimates}

The following off-policy evaluation estimates can be abstracted as functions that take as input: a partial trajectory $(x_t,a_t,r_t)_{t=0}^{T}$ of length $T+1$, the target policy $\pi_\theta$ and a behavior policy $\mu$. Optionally, the function could also take as input some critic function $Q$. We detail how to compute certain estimates below.

\paragraph{DR estimates.} In Algorithm 2, we provided the pseudocode for computing DR estimates. The step-wise IS estimates could be computed as a special case by setting $Q=0$.

\paragraph{TayPO-$1$ estimates.} See Algorithm 3 for details.

\begin{algorithm}[h]
\label{algo:evaluation-subroutine2}
\begin{algorithmic}
\REQUIRE \textbf{Inputs}: Trajectory $(x_t,a_t,r_t)_{t=0}^T$, target policy $\pi_\theta$, behavior policy $\mu$, (optional) critic $Q$.
\STATE Initialize $\hat{V}=Q(x_T,\pi_\theta(x_T),g)$.
\STATE Compute IS ratio $\rho_t^\theta=\pi_\theta(a_t|x_t)/\mu(a_t|x_t)$.
\STATE Compute Q-function estimates for all $Q^\mu(x_t,a_t)$. This could be done by computing $\hat{Q}^\mu(x_t,a_t)=\sum_{s\geq t} r_s\gamma^{s-t} + \gamma^{T-t} Q(x_T,a_T)$.
\STATE Compute the estimate
$\hat{V}=\hat{Q}^\mu(x_0,a_0) + \sum_{t=0}^T \gamma^t (\rho_t^\theta-1) \hat{Q}^\mu(x_t,a_t)$.
\STATE Output $\hat{V}$ as an estimate to $V^{\pi_\theta}(x_0,g)$.
\caption{Example: an off-policy evaluation subroutine for computing the TayPO-$1$ estimate}
\end{algorithmic}
\end{algorithm}

\paragraph{TayPO-$2$ estimates.}
See Algorithm 4 for details.

\begin{algorithm}[h]
\label{algo:evaluation-subroutine2}
\begin{algorithmic}
\REQUIRE \textbf{Inputs}: Trajectory $(x_t,a_t,r_t)_{t=0}^T$, target policy $\pi_\theta$, behavior policy $\mu$, (optional) critic $Q$.
\STATE Initialize $\hat{V}=Q(x_T,\pi_\theta(x_T),g)$.
\STATE Compute IS ratio $\rho_t^\theta=\pi_\theta(a_t|x_t)/\mu(a_t|x_t)$.
\STATE Compute Q-function estimates for all $Q^\mu(x_t,a_t)$. This could be done by computing $\hat{Q}^\mu(x_t,a_t)=\sum_{s\geq t} r_s\gamma^{s-t} + \gamma^{T-t} Q(x_T,a_T)$.
\STATE Compute the first-order estimate
$\hat{V}_1=\hat{Q}^\mu(x_0,a_0) + \sum_{t=0}^T \gamma^t (\rho_t^\theta-1) \hat{Q}^\mu(x_t,a_t)$.
\STATE Compute the second-order estimate
$\hat{V}_2=\sum_{t=0}^T \sum_{s=t+1}^T \gamma^s (\rho_t^\theta-1)(\rho_s^\theta-1) \hat{Q}^\mu(x_s,a_s)$.

\STATE Output $\hat{V}_1+\hat{V}_2$ as an estimate to $V^{\pi_\theta}(x_0,g)$.
\caption{Example: an off-policy evaluation subroutine for computing the TayPO-$2$ estimate}
\end{algorithmic}
\end{algorithm}

\paragraph{Truncated DR estimates.} See Algorithm 5 for more details. Truncated DR is similar to DR except that the IS ratio is replaced by truncated IS ratios $\text{min}(\rho_t^\theta,\bar{\rho})$ for some $\rho$. For the experiments we set $\rho=1$, inspired by V-trace  \citep{munos2016safe,espeholt2018impala}. The main motivation for the truncation is to control for the variance induced by IS ratios. However, this also introduces bias into the estimates, unless the samples are near on-policy.

\begin{algorithm}[h]
\label{algo:evaluation-subroutine}
\begin{algorithmic}
\REQUIRE \textbf{Inputs}: Trajectory $(x_t,a_t,r_t)_{t=0}^T$, target policy $\pi_\theta$, behavior policy $\mu$, (optional) critic $Q$.
\STATE Initialize $\hat{V}=Q(x_T,\pi_\theta(x_T),g)$.
\FOR{$t=T-1,\dots0$}
\STATE Compute truncated IS ratio $\tilde{\rho}_t^\theta=\min(\pi_\theta(a_t|x_t)/\mu(a_t|x_t),\rho)$ for some $\rho>0$.
\STATE Recursion: $\hat{V}\leftarrow Q(x_t,\pi_\theta(a_t),g) + \gamma \tilde{\rho}_t^\theta (r_t + \gamma Q(x_{t+1},\pi_\theta(x_{t+1}),g) - Q(x_t,a_t)) + \gamma \tilde{\rho}_t^\theta \hat{V}$.
\ENDFOR
\STATE Output $\hat{V}$ as an estimate to $V^{\pi_\theta}(x_0,g)$.
\caption{Example: an off-policy evaluation subroutine for computing the truncated DR estimate}
\end{algorithmic}
\end{algorithm}

\subsection{Implementing meta-RL estimates}
To implement meta-RL estimates in a auto-differentiation framework, the aim is to construct a single scalar objective, the auto-differentiation of which produces an estimate to the meta-gradient.

Let $\hat{V}_\text{inner}(\theta,D)$ be an inner loop objective one can use to construct the inner loop adaptation steps. In this case, $\hat{V}_\text{inner}(\theta,D)$ can take as input: the parameter $\theta$ and some data $D$. Here, for example, the data $D$ might consist of sampled trajectories and other hyper-parameters such as discount factors. In our paper, this objective can be replaced by any off-policy evaluation estimates. The updated parameter is computed as: $\theta'=\theta+\eta\nabla_\theta \hat{V}_\text{inner}(\theta,D)$.

The meta objective is defined as the value function, or equivalently some outer loop objective $\hat{V}_\text{outer}(\theta,D)$ that also takes as input the parameter $\theta$ and some data $D$. The overall objective can be defined as:
\begin{align*}
    \hat{V}_\text{outer}\left(\theta+\eta \nabla_\theta \hat{V}_\text{inner}(\theta,D)\right).
\end{align*}
Auto-differentiaing through the above objective can produce estimates to meta-gradients. This objective is also easy to implement in auto-differentiation frameworks.

\section{Experiment}
\label{appendix:exp}
Below, we introduce further details in the experiments.

\subsection{Tabular MDP}

\paragraph{MDPs.} These MDPs have $|\mathcal{X}|=10$ states and $|\mathcal{A}|=5$ actions. The transition probabilities $p(\cdot|x,a)$ are generated from independent Dirichlet distributions with parameter $(\alpha,...\alpha)\in\mathbb{R}^{|\mathcal{X}|}$. Here, we set $\alpha=0.001$. The discount factor is $\gamma=0.8$ for all problems. 

\paragraph{Setups.} The policy $\pi_\theta$ is parameterized as $\pi_\theta(a|x)=\exp(\theta(x,a))/\sum_{b} \exp(\theta(x,b))$. The behavior policy $\mu$ is uniform and $\theta$ is set such that $\theta(x,a)=\log \pi(a|x)$ where $\pi = (1-\epsilon) \mu+\epsilon \pi_d$ for some deterministic policy $\pi_d$ and parameter $\epsilon\in [0,1]$. 

\paragraph{Experiments.} In each experiment, we generate a random MDP and initialize the policy. The agent colelcts $N$ trajectories of length $T=20$ (such that $\gamma^T\approx 0$) from a fixed initial state $x_0$. We then compute gradient and Hessian estimates of the initial state $V^{\pi_\theta}(x_0)$ by directly diffrentiating through various $N$-trajectory off-policy evaluation estimates: $\nabla_\theta^m \hat{V}^{\pi_\theta}(x_0)$ for $m=1,2$. We also calculate the ground truth gradient and Hessian using transition probabilities of the MDPs.

\paragraph{Accuracy measure.} Given an estimate $x\in\mathbb{R}^L$ and a ground truth value $y\in\mathbb{R}^L$, we measure the accuracy between the two tensors as:
\begin{align*}
    \text{Acc}(x,y)\coloneqq \frac{x^T y }{ \sqrt{x^Tx}\sqrt{y^Ty}}.
\end{align*}
Note that this measure is bounded between $[-1,1]$. This advantage of this measure is that it neglects the absolute scales of the tensors, i.e., if $x=ky$ for some scalar $k\neq 0$, then $\text{Acc}(x,y)=1$. This metric is used in a number of prior work \citep{foerster2018dice,farquhar2019loaded,mao2019baseline} and is potentially a more suitable measure of accuracy given that in large scale experiments, downstream applications typically use adaptive optimizers.

\paragraph{Further results: effect of sample size.}
In Figure~\ref{fig:tabular}(a), we fix the level of off-policyness $\epsilon=0.5$ and show the estimates as a function sample size $N$. As $N$ increases, the accuracy measures of most estimates increase. Intuitively, this is because the variance decreases while the bias is not impacted by the sample size. Comparing the step-wise IS estimate with the DR estimate, we see that the DR estimate generally performs better due to variance reduction. This is consistent with findings in prior work \citep{farquhar2019loaded,mao2019baseline}). Further, it is worth noting the  first-order estimate performs quite well when the $N$ is small, outperforming the DR estimate. This implies the importance of controlling the number of step-wise IS ratios for further variance reduction. However, as $N$ increases, the performance of the first-order estimate does not improve as much compared to other alternatives, and is finally surpassed by the DR estimate mainly due to bigger bias. 

Consider the second-order estimate. When $N$ is small, the second-order estimate slightly underperforms the first-order estimate. This is expected because at small sample sizes, the variance dominates. However, as $N$ increases, its performance quickly tops across all different estimates, including the DR estimate. Overall, we expect the second-order estimate to achieve a better bias-variance trade-off, especially in the medium data regime. This should be more significant in large-scale setups where estimation horizons are longer and variance dominates further. 

\begin{figure}[t]
    \centering
    \subfigure[Gradient - sample size ]{\includegraphics[keepaspectratio,width=.48\textwidth]{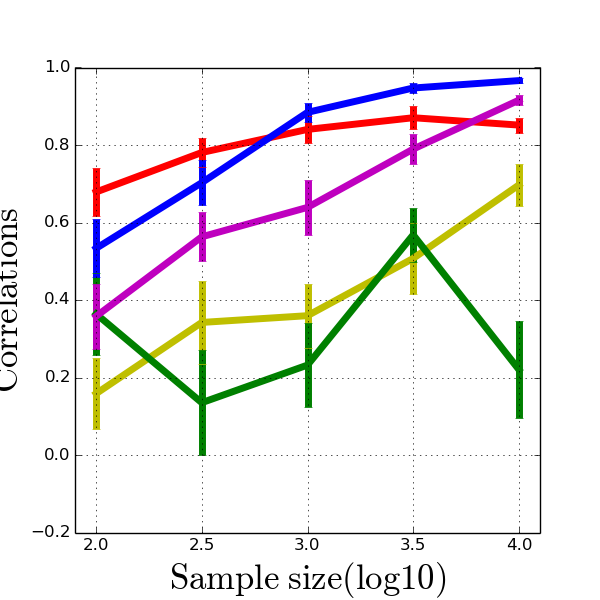}}
    \caption{Performance measure as a function of sample size . Each plot shows the accuracy measure between the estimates and the ground truth.  Overall, the second-order estimate achieves a better bias and variance trade-off. Here, the plot shows results for estimating gradients.}
    \label{fig:tabular-appendix}
\end{figure}

\subsection{Large scale experiments}

In large scale experiments, including the continuous 2-D navigation environments and simulated locomotion environments, we adopt the following setups.

\paragraph{Code base.} We adopt the code base released by \citep{rothfuss2018promp}. We make minimal changes to the code base, such that the second-order estimate is comparable to other algorithms under the established experimental setups. For missing hyper-parameters, please refer to the code base for further details. Importantly, note that in the original code base as well as the paper \citep{rothfuss2018promp}, the authors suggest the default learning rate of $\alpha=10^{-3}$, which we find tends to destabilize learning. Instead, we use $\alpha=10^{-4}$, which works more stably.

\paragraph{Computational resources.} All high-dimensional experiments are run on a computer cluster with multiple CPUs. Each separate experiment is run with $12$ CPUs as actors for data collection and parallel computations of parameter updates. The run time for each experiment is on average $36$ hours per experiment. For small experiments, we run them on a single laptop machine with $8$ CPUs.

\paragraph{Agent details.} The agent adopts a MLP architecture with $[64,64]$ hidden units. The agent takes in a state $x$ and outputs a Gaussian policy $a\sim \mathcal{N}(\mu_\theta(x),\sigma^2(x)) $ where $\mu_\theta,\sigma_\theta$ are parameterized  by the neural networks. The agent collects samples $n=40$ goals at each iteration to construct meta-gradient estimates; the inner loop adaptation is computed with step size $\eta=0.1$. Inner loop adaptations are computed with $B=20$ trajectories each of length $T=100$. All outer loop optimizers use the learning rate $\alpha=10^{-4}$. 
\paragraph{Algorithmic details.} The PROMP and PROMO-TayPO-$2$ enforces a soft trust region constraint through clipping
\begin{align*}
    \bar{\rho}_t^\theta = \text{clip}(\rho_t^\theta,1-\epsilon,1+\epsilon),
\end{align*}
where by default $\epsilon=0.3$. The PPO optimizers take $5$ gradient steps during each iteration.
All outer loop gradient based optimizers use Adam optimizers \citep{kingma2013}.

\paragraph{Summary of baselines.} The baselines include the following: TRPO-MAML  uses TRPO as the outer loop optimizer \citep{schulman2015} and the biased MAML implementation \citep{finn2017model}; TRPO-FMAML, which is short for first-order MAML,  approximates the Hessian matrix by an identity matrix \citep{finn2017model}; TRPO-EMAML augments the MAML loss function by an exploration bonus term, which effectively corrects for the bias introduced by vanilla MAML \citep{al2017continuous,stadie2018some}; TRPO-DiCE, which uses the DR estimate to implement the inner loop update, such that the Hessian estimates are unbiased \citep{foerster2018dice,farquhar2019loaded,mao2019baseline}.

Closely related to our new algorithm is the proximal meta policy search (PROMP) \citep{rothfuss2018promp}, which uses PPO as the outer loop optimizer \citep{schulman2017} and the first-order estimate (LVC estimate) as the inner loop loss function \citep{kakade2002approximately,tang2020taylor}. Our new algorithm is called PROMP-TayPO-$2$, which is a combination of PROMP and TayPO-$2$. The only difference between PROMP and PROMP-TayPO-$2$ is that the inner loop loss function is now implemented with the second-order estimate to alleviate the bias introduced by the first-order term.

\paragraph{Practical implementations of  second-order estimates.} We denote $\hat{V}_1$ as value function estimate based on the first-order approximation; and let $\hat{V}_2$ be the value function estimate. In practice, the second-order estimate we implement is a mixture between the two estimates with $\beta\in[0,1]$
\begin{align*}
    \hat{V}_\beta = (1-\beta)\hat{V}_1+\beta\hat{V}_2.
\end{align*}
This overall estimate is a convex combination of the two estimates. By moving $\beta=0$ to $\beta=1$, we interpolate between the first-order and the second-order estimate.
Throughout the large-scale experiments we find $\beta=0.3$ to work the best. We also show in the next section the sensitivity of performance to $\beta$ on 2-$D$ control environments.

\subsection{Ablation study}
\paragraph{Sensitivity to the second-order coefficient on 2-$D$ control environments.}
In Figure~\ref{fig:ball-appendix}(a), we show the sensitivity of the performance to the mixing coefficient $\beta\in[0,1]$. Note that when $\beta=0$, the estimate is exactly the first-order estimate; when $\beta=1$, the estimate recovers the full second-order estimate. We see that on the 2-$D$ control environment, as we increase $\beta$ from $0$ to $1$, the performance stably improves. When $\beta\approx 0.9$, it seems that the performance reaches a plateau, indicating that $\beta=0.9$ potentially achieves the best bias and variance trade-off between the two extremes. 

\paragraph{Sensitivity to off-policyness on 2-$D$ control environments.} In Figure~\ref{fig:ball-appendix}(b), we study the sensitivity of algorithms to off-policyness in high-d setups. In PROMP, the policies are optimized with behavior policy $\mu$, whose distance against the target policy is constrained by a trust region. The trust region is enforced softly via a clipping coefficient $\epsilon$ (see Appendix~\ref{appendix:exp} or \citep{rothfuss2018promp} for details). When $\epsilon$ increases, the effective level of off-policyness increases. We see that as $\epsilon$ increases, both first and second-order estimates degrade in performance. However, the second-order estimates perform more robustly, as similarly suggested in the toy example.

\begin{figure}[t]
    \centering
    \subfigure[2-$D$ control: second-order coefficient ]{\includegraphics[keepaspectratio,width=.48\textwidth]{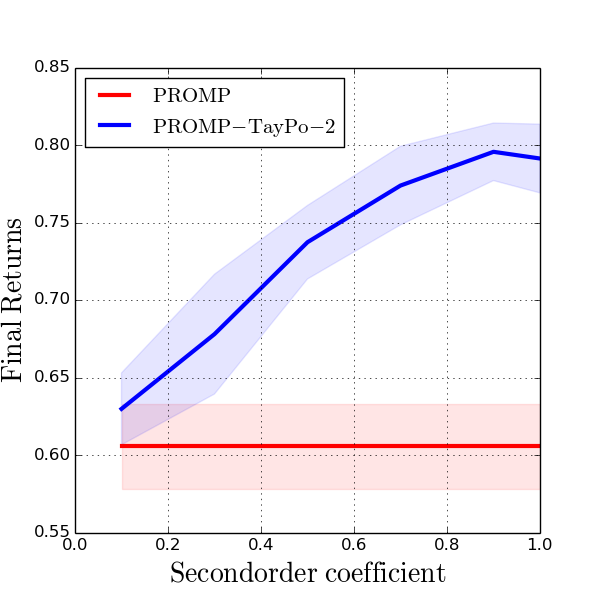}}
    \subfigure[2-$D$ control: off-policy ]{\includegraphics[keepaspectratio,width=.48\textwidth]{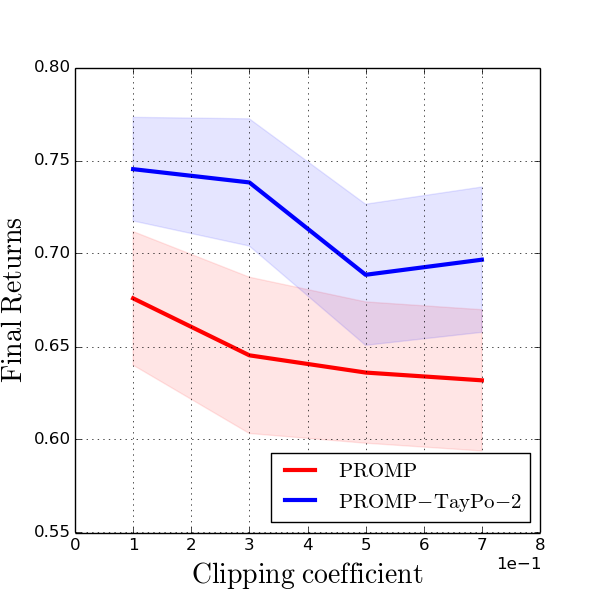}}
    \caption{Ablation study: (a) second-order coefficients; (b) off-policyness.  The above two plots show the sensitivity of the first and second-order estimate to hyper-parameters, for 2-$D$ control. The second-order estimate is generally more robust. All curves are averaged over $10$ runs. }
    \label{fig:ball-appendix}
\end{figure}

\end{document}